%% file: main_arxiv.tex
\documentclass{article} % For LaTeX2e
\usepackage{iclr2026_conference, times}
\iclrpreprintcopy
\usepackage{microtype}
\usepackage{hyperref}
\usepackage{url}
\usepackage{booktabs}

\usepackage{graphicx}
\usepackage{amsmath,amssymb,amsthm,mathtools}
\usepackage{enumitem}
\usepackage{float}
\usepackage{algorithm}
\usepackage{algpseudocode}
\usepackage{tikz}
\usetikzlibrary{arrows.meta}
\usepackage[capitalize,noabbrev]{cleveref}
\usepackage{thm-restate}
\usepackage{xcolor} % dvipsnames pre-passed above to avoid clash with tikz
\usepackage{multirow}
\usepackage{subcaption}
\usepackage{wrapfig}

\usepackage[most]{tcolorbox}
\newtcolorbox{samplebox}[1]{
    colback=blue!3!white,
    colframe=black!50!white,
    colbacktitle=black!50!white,
    coltitle=white,
    boxrule=0.5pt,
    arc=1mm,
    left=2pt, right=2pt, top=2pt, bottom=2pt,
    boxsep=1pt,
    toptitle=1pt, bottomtitle=1pt,
    title={#1}
}

\usepackage[most]{tcolorbox}
\definecolor{keylavender}{RGB}{245,245,250}
\tcbset{bofflab-common/.style={
  enhanced, breakable,
  boxsep=6pt, left=8pt, right=8pt, top=6pt, bottom=6pt,
  before skip=8pt, after skip=8pt,
  arc=10pt
}}
\newtcolorbox{lavenderbox}{
  bofflab-common,
  colback=keylavender,
  colframe=keylavender,
  boxrule=0pt,
  arc=5pt,
  left=3pt, right=3pt, top=0pt, bottom=0pt
}

\theoremstyle{plain}
\newtheorem{theorem}{Theorem}[section]

\newtheorem{lemma}[theorem]{Lemma}

\theoremstyle{definition}
\newtheorem{definition}[theorem]{Definition}

\crefformat{section}{Section~#2#1#3}
\crefformat{subsection}{Section~#2#1#3}
\crefformat{subsubsection}{Section~#2#1#3}
\crefformat{equation}{(#2#1#3)}
\crefrangeformat{equation}{(#3#1#4) to (#5#2#6)}
\crefmultiformat{equation}{(#2#1#3)}{ and (#2#1#3)}{, (#2#1#3)}{ and (#2#1#3)}
\crefname{appendix}{Appendix}{Appendices}
\Crefname{appendix}{Appendix}{Appendices}
\crefformat{appendix}{Appendix~#2#1#3}
\let\SCFMLMorigappendix\appendix
\renewcommand{\appendix}{%
\SCFMLMorigappendix
\crefalias{section}{appendix}%
\crefalias{subsection}{appendix}%
\crefalias{subsubsection}{appendix}%
}
\crefname{table}{Table}{Tables}
\crefname{figure}{Figure}{Figures}
\crefname{lemma}{Lemma}{Lemmas}
\crefname{proposition}{Proposition}{Propositions}
\crefname{assumption}{Assumption}{Assumptions}

\graphicspath{{figures/markdown_export/}}

\usepackage{lineno}

\definecolor{darkblue}{rgb}{0, 0, 0.5}
\hypersetup{colorlinks=true, citecolor=darkblue, linkcolor=darkblue, urlcolor=darkblue, hypertexnames=false}

\definecolor{nbpurple}{rgb}{0.5, 0, 0.5}

\definecolor{darkgreen}{rgb}{0, 0.5, 0}

\newcommand{\cutabstractup}{\vspace*{-0.2in}}

\DeclareMathOperator*{\argmin}{argmin}

\title{Self-conditioned Flow Map Language Models via Fixed-point Flows}
\author{%
    \parbox{0.8\linewidth}{
    Jaehoon Yoo$^1$\thanks{Equal contribution $^\dag$ Equal advising}\hspace{1em}
    Wonjung Kim$^1$\footnotemark[1]\hspace{1em}
    Floor Eijkelboom$^2$\hspace{1em}
    Chanhyuk Lee$^1$\\
    Nicholas M. Boffi$^3$\hspace{1em}
    Seunghoon Hong$^1$$^\dag$\hspace{1em}
    Jinwoo Kim$^1$$^\dag$
    }\vspace{0.1cm}\\
    $^1$KAIST\hspace{1em}
    $^2$University of Amsterdam\hspace{1em}
    $^3$Carnegie Mellon University
}

\begin{document}

% Line numbering is handled automatically by neurips_2026 in submission mode
% (it loads lineno and calls \linenumbers itself; see neurips_2026.sty).
% The COLM-specific \ifcolmsubmission guard is undefined under the NeurIPS
% style, so it is kept commented out here. Re-enable it only if you switch
% \usepackage{neurips_2026} back to \usepackage[submission]{colm2026_conference}.
% \ifcolmsubmission
% \linenumbers
% \fi

\maketitle

% \nb{Title: I think it should be punchier. What about ``Self-Conditioned Flow Maps''? Or ``Self-Conditioned Flow Map Language Models''?}

\cutabstractup
\begin{abstract}
Self-conditioning is a core technique that enhances continuous flow-based language models, where the model learns to denoise generated text by conditioning on its own denoising estimate.
While empirically successful, its performance improvements are poorly understood.
Moreover, there is growing interest in the use of few-step generators based on flow maps, for which how to leverage self-conditioning is unclear.
Here, we show that flow language models with self-conditioning solve a fixed-point iteration that bootstraps the performance of the learned denoiser.
We use this viewpoint to formulate \textbf{fixed-point flows}, a two-dimensional class of self-conditioned flows, where the first dimension represents the flow process and the second represents the fixed-point iteration.
%
% \nb{Not sure how useful this special case is.}
% %
% This casts conventional self-conditioned sampling as a special case, that performs single-step FPIs warm-starting from estimates at the previous flow time.
%
% \nb{I feel like we want this to be simpler and punchier -- basically just saying that we identify how to access the fixed point, and this gives us a new distillation technique for more effective self-conditioned flow maps.}
%
% We show that warm starts can be dropped without loss of quality by increasing FPIs beyond a single step, and also show that an SC model can be turned into an SC-free model without performance degradation by compressing these cold-start FPIs using fixed-point distillation.
%
We show that fixed-point flows define valid flow maps, and show that they can be distilled from self-conditioned flow models by compressing both fixed-point iterations and the flow process, the former with fixed-point distillation and the latter with flow map distillation.
Our resulting flow map language model, FMLM$^\star$, outperforms state-of-the-art self-conditioned models and few-step models in one- and few-step generation on OpenWebText.\footnote{Code is available at \url{https://github.com/Ugness/self-conditioned-fmlm}.}
% under matched sampling budgets.
% Our resulting flow map language model, FMLM$^\star$, outperforms state-of-the-art few-step language models in one-step and few-step generation on OpenWebText.
\end{abstract}

\input{sections/introduction}

\input{sections/background}

\input{sections/fixed_point_refinement}

\input{sections/distillation}

\input{sections/experiments}
\input{sections/limitations}
\input{sections/conclusion}

\bibliography{iclr2026_conference}
\bibliographystyle{iclr2026_conference}

\newpage
% ----------------------------------------------------------------------
\appendix
% ----------------------------------------------------------------------

\input{sections/appendix_proofs}

\input{sections/appendix_implementation}

\input{sections/appendix_results}
\input{sections/appendix_samples}

% --- Working-notes appendix commented out per request ---
% \appendix
% \section{Working Notes, Open Questions, and Planned Analyses}
% \label{sec:working-notes}
% This appendix preserves less-polished material from the research draft verbatim in
% spirit: proposed paper layout, generic fixed-point formulation, observations,
% conjectures, planned analyses, and the full list of external references. It is kept
% deliberately informal; \todo{...} markers and \draft{blue draft notes} flag
% material that is unfinished or uncertain.
% \input{sections/working_notes}

\end{document}

%% file: sections/introduction.tex
\section{Introduction}
\label{sec:intro}

% \nb{Intro structure: I think we need to actually define self-conditioning here. Suggested arc --- (1) start with flow-based language models and flow map language models; (2) emphasize that recently the performance of flow-based LMs has improved through the incorporation of self-conditioning (cite analog bits, etc.); (3) then say it's unclear how to use this for flow maps, and also unclear what it's actually doing.}

Language models (LMs) based on continuous flows have recently emerged as a promising paradigm for non-autoregressive text generation~\citep{lee2026flow,chemseddine2026spherical,deschenaux2026language}.
By learning to denoise tokens in a continuous space, these models perform parallel iterative generation through a deterministic evolution driven by a velocity field.
Importantly, such processes define a unique flow map, the solution operator that directly transports noise to data in as few as one function evaluation.
This advantage has sparked recent breakthroughs on distillation of flow language models into flow map language models that are capable of generating text in one to few inference steps~\citep{lee2026flow,roos2026categorical,potaptchik2026discrete}.

Recently, the performance of flow language models has improved through the incorporation of a self-conditioning mechanism~\citep{chen2022analog}.
Unlike in usual flow training, self-conditioned flow models learn to denoise an input by conditioning on its own denoising prediction.
Then, during generation, they perform denoising at each flow timestep by conditioning on the previously denoised outcome.
Self-conditioning has empirically proven very effective~\citep{dieleman2022continuous,strudel2022self}, and has been widely adopted in the latest state-of-the-art flow language models~\citep{chen2026langflow,hu2026elf,batzolis2026towards,meshchaninov2026train,yang2026continuous}.
Yet, despite its success, why self-conditioning leads to improvements remains poorly understood.
Furthermore, it is unclear how to distill flows into flow maps under self-conditioning, as it introduces additional dependencies across generation timesteps.

In this work, we introduce \textbf{fixed-point flows}, a mathematical framework for self-conditioned flows and their associated flow maps (\Cref{fig:overview}).
Our key observation is that self-conditioned flow language models solve a fixed-point iteration that bootstraps the performance of learned denoising.
With this insight, we characterize a two-dimensional class of self-conditioned flows, where the first dimension represents the original flow, and the second represents fixed-point iterations.
We use this view to distill self-conditioned flow language models into flow maps by compressing both fixed-point iterations and flow, achieving state-of-the-art one- and few-step language modeling.
Our main contributions are:
\begin{itemize}[leftmargin=*]
  \item \textbf{Fixed-point view of self-conditioning.} We show that flow language models with self-conditioning implicitly learn a fixed-point iteration that refines the denoising estimate. We theoretically show that such behavior can emerge from self-conditioned training under a contractivity assumption.
  \item \textbf{Fixed-point flows and their flow maps.} We introduce fixed-point flows, a formalization of self-conditioned flows. We show that fixed-point flows define a valid flow map, which can be learned by compressing both fixed-point iterations and flow with the respective few-step distillation objectives.
  \item \textbf{Empirical results.} We distill self-conditioned flow language models into flow map language models, achieving state-of-the-art one- and few-step generation on OpenWebText. We also show that it is possible to distill self-conditioned models into self-conditioning-free ones without degradation.
\end{itemize}

%% file: sections/background.tex
\section{Preliminary}
\label{sec:background}
% \cutparagraphup
\begin{figure}[t!]
  \centering
  \input{figures/diagrams2}
  \caption{\textbf{Overview.}
  We show that flow language models with self-conditioning solve a fixed-point iteration that refines the denoising estimate.
  We leverage this insight to formulate fixed-point flows, a class of self-conditioned flows that run fixed-point iterations at each flow timestep.
  Fixed-point flows yield valid flow maps, which we learn by compressing both the flow and the fixed-point iterations.
  % We show that self-conditioned flow models solve a fixed-point iteration that refines the output iteratively. The resulting fixed point yields an autonomous velocity and hence a valid flow map. Compressing both the flow and the fixed-point iterations via distillation produces a self-conditioned flow map for one- and few-step generation.
  % \nb{I like this overview and the general schematic. Two things: (1) place it at the top of page 2; (2) it needs to be more polished.}
  }
  \label{fig:overview}
  % \vspace{-1em}
\end{figure}

\paragraph{Flow language models.}
% \nb{I wonder if fixed-point iteration shouldn't be in Background at all --- it's really basic math. Maybe move it into Methods, or into the contribution section where we develop the understanding of the denoiser.}

% \nb{Notation: let's use the notation from our previous paper --- call the denoiser $D$ rather than $f_\theta$.}
% \nb{We say ``predicts the clean endpoint $x_1$ from a noisy interpolant $x_t$,'' but we haven't defined what an interpolant is yet.}
% \nb{Notation: let's use $I_t$ for the interpolant (instead of $x_t$).}
% \nb{Notation: let's match all the previous papers and use the velocity $b_t(x_t)$ rather than $D_\theta$. (NB: this is in tension with my earlier note above suggesting we call the denoiser $D$ --- need to settle whether the paper centers on the denoiser or the velocity.)}

% \nb{The flow $\dot{x}_t = b_t(x_t)$ is pretty fundamental --- we should probably give it its own displayed equation rather than leaving it inline.}

Let $V$ be a vocabulary of tokens, and denote text data with length $L$ by ${\bf y} = ({\bf y}^l)_{l=1}^L \in V^L$.
The goal of language modeling is to learn the data distribution $p({\bf y})$ so that we can draw a new text sample efficiently.
Flow language models choose a continuous embedding ${\bf y}\mapsto {\bf x}\in\mathbb{R}^{L\times d}$ and a decoder ${\bf x}\mapsto {\bf y}$, and model the induced distribution $p({\bf x})$ on the embedding space, generating $\hat{{\bf x}}\sim p({\bf x})$ and then rounding into discrete language through $\hat{{\bf x}}\mapsto \hat{{\bf y}}$.

To model the now-continuous data distribution $p({\bf x})$, flow language models use flow matching over a stochastic interpolant~\citep{lipman2022flow,albergo2025stochastic}.
They specify a probability path $p_t({\bf x}_t)$ over $t\in[0,1]$ as the density of an interpolant $I_t \coloneqq (1-t){\bf x}_0 + t{\bf x}_1$ between noise ${\bf x}_0\sim p_0$ and data ${\bf x}_1\sim p_1$.
The path $p_t$ admits a deterministic evolution equation that can be used to draw a sample ${\bf x}_t\sim p_t$ at inference time:
\begin{equation}\label{eq:flow_velocity}
    \dot{{\bf x}}_t = b_t({\bf x}_t),\quad b_t({\bf x}) = \mathbb{E}[{\bf x}_1-{\bf x}_0 \mid I_t = {\bf x}],\quad {\bf x}_0\sim p_0,\quad t\in[0,1],
\end{equation}
where $b_t$ is the velocity field of the flow.
% , with expectation taken with respect to the noise ${\bf x}_0$ and data ${\bf x}_1$ yielding the interpolant $I_t$.
If a model $\hat{b}$ of the velocity is available, a sample $\hat{{\bf x}}_1\sim p_1$ can be approximately drawn by numerically integrating \eqref{eq:flow_velocity} across a time grid $0=t_0<...<t_N=1$.
A simple choice for the integration is the forward Euler scheme:
\begin{equation}\label{eq:euler}
    \hat{{\bf x}}_{t_{i+1}} = \hat{{\bf x}}_{t_i} + (t_{i+1}-t_i)\hat{b}_{t_i}(\hat{{\bf x}}_{t_i}),\quad \hat{{\bf x}}_0\sim p_0.
\end{equation}
Instead of learning the velocity directly, it is more common in language modeling to learn the denoiser $D_t$~\citep{lee2026flow,hu2026elf} which outputs the mean of clean data, and from which one can recover the velocity~\citep{albergo2025stochastic,li2026back}:
\begin{equation}\label{eq:denoiser}
    D_t({\bf x}) \coloneqq \mathbb{E}[{\bf x}_1\mid I_t={\bf x}],\quad b_t({\bf x}) = \frac{D_t({\bf x}) - {\bf x}}{1-t}.
\end{equation}
The ideal denoiser $D$ can be learned in practice by solving a regression problem $D = \argmin_{\hat{D}}\mathcal{L}(\hat{D})$ that predicts clean data by minimizing the following loss:
\begin{equation}\label{eq:denoiser_loss}
    \mathcal{L}(\hat{D}) \coloneqq \int_0^1 \mathbb{E}|\hat{D}_t(I_t) - {\bf x}_1|^2{\rm d}t.
\end{equation}
While we have used square loss above, our discussions and results readily transfer to the cross-entropy setting of \citet{dieleman2022continuous,eijkelboom2024variational, lee2026flow}.
With a learned denoiser $\hat{D}$, generation can be done by turning it into an estimation of the velocity $\hat{b}$ through \eqref{eq:denoiser} and then leveraging the forward Euler scheme \eqref{eq:euler}.

% \cutparagraphup
\paragraph{Flow map language models.}
A key advantage of continuous flow for language modeling is that its deterministic generative process~\eqref{eq:flow_velocity} driven by the velocity $b_t$ defines a unique flow map $X_{s,t}:\mathbb{R}^{L\times d}\to \mathbb{R}^{L\times d}$, the solution operator that transports a sample directly ${\bf x}_t=X_{s,t}({\bf x}_s)$ between any timesteps $(s,t)\in[0,1]^2$.
It satisfies the following integral equation:
\begin{equation}\label{eq:flow_map}
    X_{s,t}({\bf x}_s) = {\bf x}_s + \int_s^t b_\tau({\bf x}_\tau)\,{\rm d}\tau.
\end{equation}
% Here, $v_{s,t}$ is a representation of the flow map which is called the average velocity, or mean flow.
If available, a flow map allows for efficient few-step generation by sequentially evaluating $\hat{{\bf x}}_{t_{i+1}} = X_{t_i,t_{i+1}}(\hat{{\bf x}}_{t_i})$ upon any grid $0=t_0<...<t_N=1$, even one-step generation via $\hat{{\bf x}}_1 = X_{0,1}(\hat{{\bf x}}_0)$.

Recent works have demonstrated that flow map language models can be learned by distilling the flow velocity learned by flow language models~\citep{lee2026flow,roos2026categorical,potaptchik2026discrete}.
These methods leverage a set of mathematical relations between the flow velocity and the flow map~\citep{boffi2026build, lee2026flow}, which are derivable from the integral equation in \eqref{eq:flow_map}, and learn the flow map with respect to a given flow velocity to satisfy these relations.

%% file: figures/diagrams2.tex
\definecolor{cflow}{RGB}{45,105,180}
\definecolor{cfp}{RGB}{32,150,120}
\definecolor{ctraj}{RGB}{230,145,35}
\definecolor{cwarm}{RGB}{200,60,55}
\definecolor{cstate}{RGB}{45,45,52}
\definecolor{caxis}{RGB}{125,125,133}
\definecolor{caxistext}{RGB}{80,80,88}

\begin{tikzpicture}[
  box/.style={draw=cstate, line width=.6pt, rounded corners=1.5pt},
  pt/.style={circle, fill=cstate, inner sep=1.0pt},
  fp/.style={circle, fill=ctraj, inner sep=1.0pt},
  arr/.style={-{Stealth[length=1.6mm,width=1.2mm]}, line width=.55pt, line cap=round},
  arrdot/.style={arr, shorten <=1.5pt, shorten >=1.5pt},
  fpidot/.style={arrdot, caxis, densely dotted},
  trajarr/.style={arrdot, ctraj},
  warmarr/.style={arrdot, cwarm, line width=.7pt},
  flowarr/.style={arrdot, cflow, line width=.85pt},
  fparr/.style={arr, cfp, line width=.85pt, shorten <=1.2pt, shorten >=1.2pt},
  axisarr/.style={-{Stealth[length=1.5mm,width=1.1mm]}, line width=.7pt, caxis},
  title/.style={font=\small, align=center},
  rowlab/.style={font=\footnotesize\bfseries, align=center, rotate=90, text=cstate},
  lab/.style={font=\scriptsize, text=caxistext},
  tlab/.style={font=\scriptsize, text=caxistext}
]

\def\S{2.35}
\def\xA{0.30}\def\xB{0.75}\def\xC{1.20}\def\xD{1.65}\def\xE{2.10}
\def\xb{0.975}\def\xc{1.425}
\def\yA{0.30}\def\yB{0.75}\def\yC{1.20}\def\yD{1.65}\def\yE{2.10}
\def\rT{6.8}\def\rM{3.3}\def\rB{0.0}
\def\cA{0.0}\def\cB{3.45}\def\cC{6.90}

% ---------- row labels ----------
\node[rowlab] at (-0.65,\rT+\S/2) {Flow};
\node[rowlab] at (-0.59,\rM+\S/2) {Self-conditioned\\[-1pt]flow};
\node[rowlab] at (-0.65,\rB+\S/2) {Distillation};

% =======================================================
% ROW 1 : FLOW   (compact axis indicator on first box)
% =======================================================
\begin{scope}[shift={(\cA,\rT)}]
  \node[title] at (\S/2,\S+0.32) {Flow training};
  \draw[box] (0,0) rectangle (\S,\S);
  \foreach \y in {\yE,\yD,\yC,\yB,\yA}{
    \node[pt] at (\xA,\y){}; \node[fp] at (\xB,\y){};
    \draw[fpidot] (\xA,\y)--(\xB,\y);}
  % vertical (flow) axis with t=0 (bottom) -> t=1 (top)
  \draw[axisarr] (-0.14,0.30) -- (-0.14,2.10);
  \node[lab, rotate=90, anchor=center] at (-0.28,1.20) {flow};
  \node[tlab, rotate=90, anchor=center] at (-0.28,0.35) {$t{=}0$};
  \node[tlab, rotate=90, anchor=center] at (-0.28,2.05) {$t{=}1$};
  % horizontal (fixed-point dynamics) axis
  \draw[axisarr] (0.55,-0.10) -- (1.80,-0.10);
  \node[lab, anchor=north] at (1.175,-0.11) {fixed-point dynamics};
\end{scope}

\begin{scope}[shift={(\cB,\rT)}]
  \node[title] at (\S/2,\S+0.32) {Flow sampling};
  \draw[box] (0,0) rectangle (\S,\S);
  \foreach \y in {\yD,\yC,\yB,\yA}{
    \node[pt] at (\xA,\y){}; \node[fp] at (\xB,\y){};
    \draw[fpidot] (\xA,\y)--(\xB,\y);}
  \node[pt] at (\xA,\yE){};
  \foreach \ys/\ye in {\yA/\yB,\yB/\yC,\yC/\yD,\yD/\yE}{\draw[trajarr] (\xA,\ys)--(\xA,\ye);}
\end{scope}

% =======================================================
% LEGEND
% =======================================================
\begin{scope}[shift={(\cC-0.1,\rT+0.05)}]
  \def\dx{0.30}\def\ax{0.12}\def\axe{0.50}\def\tx{0.40}
  \node[pt] at (\dx,2.00){};
  \node[font=\scriptsize,anchor=west] (L1) at (\tx,2.00) {flow state};
  \node[fp] at (\dx,1.64){};
  \node[font=\scriptsize,anchor=west] (L2) at (\tx,1.64) {denoising estimate};
  \draw[fpidot] (\ax,1.28)--(\axe,1.28);
  \node[font=\scriptsize,anchor=west] (L3) at (\tx,1.28) {fixed-point iteration};
  \draw[warmarr] (\ax,0.92)--(\axe,0.92);
  \node[font=\scriptsize,anchor=west] (L4) at (\tx,0.92) {warm start};
  \draw[flowarr] (\ax,0.56)--(\axe,0.56);
  \node[font=\scriptsize,anchor=west] (L5) at (\tx,0.56) {flow-map jump};
  \draw[fparr] (\ax,0.20)--(\axe,0.20);
  \node[font=\scriptsize,anchor=west] (L6) at (\tx,0.20) {fixed-point jump};
  \draw[draw=caxis, line width=.5pt, rounded corners=2.5pt] (-0.05,0.02) rectangle (2.6,2.18);
\end{scope}

% =======================================================
% ROW 2 : SELF-CONDITIONED FLOW
% =======================================================
\begin{scope}[shift={(\cA,\rM)}]
  \node[title] at (\S/2,\S+0.32) {SC training};
  \draw[box] (0,0) rectangle (\S,\S);
  \foreach \y in {\yE,\yD,\yC,\yB,\yA}{
    \node[pt] at (\xA,\y){}; \node[pt] at (\xB,\y){}; \node[fp] at (\xC,\y){};
    \draw[fpidot] (\xA,\y)--(\xB,\y); \draw[fpidot] (\xB,\y)--(\xC,\y);}
\end{scope}

\begin{scope}[shift={(\cB,\rM)}]
  \node[title] at (\S/2,\S+0.50) {Warm-start};
  \node[title] at (\S/2,\S+0.2) {SC sampling};
  \draw[box] (0,0) rectangle (\S,\S);
  \node[pt] at (\xA,\yA){}; \node[fp] at (\xB,\yA){}; \draw[fpidot] (\xA,\yA)--(\xB,\yA);
  \node[pt] at (\xA,\yB){}; \node[fp] at (\xb,\yB){}; \draw[fpidot] (\xA,\yB)--(\xb,\yB);
  \node[pt] at (\xA,\yC){}; \node[fp] at (\xC,\yC){}; \draw[fpidot] (\xA,\yC)--(\xC,\yC);
  \node[pt] at (\xA,\yD){}; \node[fp] at (\xc,\yD){}; \draw[fpidot] (\xA,\yD)--(\xc,\yD);
  \node[pt] at (\xA,\yE){};
  \draw[warmarr] (\xB,\yA)--(\xb,\yB);
  \draw[warmarr] (\xb,\yB)--(\xC,\yC);
  \draw[warmarr] (\xC,\yC)--(\xc,\yD);
  \foreach \ys/\ye in {\yA/\yB,\yB/\yC,\yC/\yD,\yD/\yE}{\draw[trajarr] (\xA,\ys)--(\xA,\ye);}
\end{scope}

\begin{scope}[shift={(\cC,\rM)}]
  \node[title] at (\S/2,\S+0.50) {Cold-start};
  \node[title] at (\S/2,\S+0.2) {SC sampling};
  \draw[box] (0,0) rectangle (\S,\S);
  \foreach \y in {\yD,\yC,\yB,\yA}{
    \node[pt] at (\xA,\y){}; \node[pt] at (\xB,\y){}; \node[pt] at (\xC,\y){};
    \node[pt] at (\xD,\y){}; \node[fp] at (\xE,\y){};
    \draw[fpidot] (\xA,\y)--(\xB,\y); \draw[fpidot] (\xB,\y)--(\xC,\y);
    \draw[fpidot] (\xC,\y)--(\xD,\y); \draw[fpidot] (\xD,\y)--(\xE,\y);}
  \node[pt] at (\xA,\yE){};
  \foreach \ys/\ye in {\yA/\yB,\yB/\yC,\yC/\yD,\yD/\yE}{\draw[trajarr] (\xA,\ys)--(\xA,\ye);}
\end{scope}

% =======================================================
% ROW 3 : DISTILLATION  (A + B = Ours)
% =======================================================
\begin{scope}[shift={(\cA,\rB)}]
  \node[title] at (\S/2,\S+0.32) {Flow map distillation};
  \draw[box] (0,0) rectangle (\S,\S);
  \node[pt] at (\xA,\yA){}; \node[fp] at (\xB,\yA){}; \draw[fpidot] (\xA,\yA)--(\xB,\yA);
  \node[pt] at (\xA,\yC){}; \node[fp] at (\xB,\yC){}; \draw[fpidot] (\xA,\yC)--(\xB,\yC);
  \node[pt] at (\xA,\yE){};
  \draw[flowarr] (\xA,\yA) to[out=112,in=-112,looseness=.85] (\xA,\yC);
  \draw[flowarr] (\xA,\yC) to[out=112,in=-112,looseness=.85] (\xA,\yE);
  % \draw[flowarr] (\xA,\yA) to[out=112,in=-112,looseness=.85] (\xA,\yC);
  % \draw[flowarr] (\xA,\yC) to[out=112,in=-112,looseness=.85] (\xA,\yE);
  \draw[flowarr] (\xA,\yA) to[out=132,in=-132,looseness=.60] (\xA,\yE);
\end{scope}

\node[font=\large\bfseries, text=caxis] at (\cA+\S+0.55,\rB+\S/2) {$+$};

\begin{scope}[shift={(\cB,\rB)}]
  \node[title] at (\S/2,\S+0.50) {Fixed-point};
  \node[title] at (\S/2,\S+0.24) {distillation};
  \draw[box] (0,0) rectangle (\S,\S);
  \foreach \y in {\yA,\yB,\yC,\yD,\yE}{\node[pt] at (\xA,\y){};}
  \foreach \y in {\yA,\yB,\yC,\yD}{
    \draw[fparr] (\xA,\y) to[bend right=18] (\xE,\y);
    \node[fp] at (\xE,\y){};}
  \foreach \ys/\ye in {\yA/\yB,\yB/\yC,\yC/\yD,\yD/\yE}{\draw[trajarr] (\xA,\ys)--(\xA,\ye);}
\end{scope}

\node[font=\large\bfseries, text=caxis] at (\cB+\S+0.55,\rB+\S/2) {$=$};

\begin{scope}[shift={(\cC,\rB)}]
  \node[title] at (\S/2,\S+0.50) {SC flow map};
  \node[title] at (\S/2,\S+0.24) {distillation};
  \draw[box] (0,0) rectangle (\S,\S);
  \node[pt] at (\xA,\yA){}; \node[pt] at (\xA,\yC){}; \node[pt] at (\xA,\yE){};
  \draw[fparr] (\xA,\yA) to[bend right=18] (\xE,\yA);
  \draw[fparr] (\xA,\yC) to[bend right=18] (\xE,\yC);
  \node[fp] at (\xE,\yA){}; \node[fp] at (\xE,\yC){};
  \draw[flowarr] (\xA,\yA) to[out=112,in=-112,looseness=.85] (\xA,\yC);
  \draw[flowarr] (\xA,\yC) to[out=112,in=-112,looseness=.85] (\xA,\yE);
  \draw[flowarr] (\xA,\yA) to[out=132,in=-132,looseness=.60] (\xA,\yE);
\end{scope}

\end{tikzpicture}

%% file: sections/fixed_point_refinement.tex
\section{Theoretical framework}
\label{sec:theory}

% \nb{Structural: I'd recommend consolidating into one Methods/Theory section that introduces the formalization of self-conditioning, the two-dimensional formalization, and cold distillation all together --- and try to state these things mathematically. Right now it feels like we're lacking evidence for \emph{why} self-conditioning produces a fixed-point iteration; it would be nice to formulate this cleanly.}

In this section, we formalize self-conditioning and propose that it learns a fixed-point iteration which refines the denoising prediction.
We then develop fixed-point flows, a framework for self-conditioned flows, along with the associated flow maps for few-step generation.
Finally, we introduce distillation methods for turning self-conditioned flow language models into few-step language models.

\subsection{Self-conditioned flow language models}\label{sec:sc_flm}

Self-conditioning is a technique introduced in \citet{chen2022analog} where the denoiser learns to predict conditioned on its own denoising estimates.
% It has been empirically successful and has been widely adopted in the latest flow language models~\citep{chen2026langflow,hu2026elf,batzolis2026towards,meshchaninov2026train,yang2026continuous}.
In these approaches, the model of the denoiser \eqref{eq:denoiser} takes an additional conditioning, $\hat{D}_t({\bf x}, {\bf z})$, which effectively reduces to a usual denoiser model when ${\bf z}=\boldsymbol{0}$.
The denoiser is trained using the following loss, where $\mu$ is a choice of training-time distribution of ${\bf z}$:
\begin{equation}\label{eq:sc_loss}
    \mathcal{L}_\mu(\hat{D}) \coloneqq \int_0^1\mathbb{E}_{{\bf x}_0,{\bf x}_1}\mathbb{E}_\mu|\hat{D}_t(I_t, {\bf z}) - {\bf x}_1|^2{\rm d}t.
\end{equation}
A common choice of $\mu$ is the mixture of delta peaks at zero ${\bf z}=\boldsymbol{0}$ and the model's own denoising prediction ${\bf z} = \mathsf{sg}(\hat{D}_t(I_t, \boldsymbol{0}))$ where $\mathsf{sg}$ is the stop-gradient operator.
This leads to the following:
\begin{equation}\label{eq:sc_loss_decomposed}
    \mathcal{L}_\mu(\hat{D}) = \frac{1}{2}\int_0^1\mathbb{E}|\hat{D}_t(I_t, \boldsymbol{0}) - {\bf x}_1|^2{\rm d}t + \frac{1}{2}\int_0^1\mathbb{E}|\hat{D}_t(I_t, \mathsf{sg}(\hat{D}_t(I_t, \boldsymbol{0})))- {\bf x}_1|^2 {\rm d}t,
\end{equation}
where the first term is the usual denoising loss \eqref{eq:denoiser_loss}, whereas the second term is a self-conditioned loss which encourages correcting an initial estimate by the model.

In general, the conditioning variable ${\bf z}$ in the training loss \eqref{eq:sc_loss} does not change the learning target of the denoiser.
This is because the Bayes-optimal prediction of ${\bf x}_1$ given an interpolant $I_t$ is specified by the ideal denoiser, $D_t(I_t)$~\eqref{eq:denoiser}, regardless of ${\bf z}$, assuming ${\bf z}$ does not leak additional information about ${\bf x}_1$ beyond what is available from $I_t$, as in \eqref{eq:sc_loss_decomposed}.
Thus, the objective trains the model to map every on-distribution conditioning ${\bf z}$ to the ideal denoising target.
We provide a formal proof:
\begin{lavenderbox}
    \begin{restatable}{proposition}{optimality}\label{prop:optimality}
        Assume that ${\bf z}\perp {\bf x}_1\mid I_t$ for almost every $t$, and assume all second moments are finite.
        Then every unrestricted population minimizer $\bar{D}$ of \eqref{eq:sc_loss} satisfies
        \begin{equation}
            \bar{D}_t({\bf x},{\bf z}) = D_t({\bf x})
        \end{equation}
        almost surely under the law of $(I_t, {\bf z})$, for almost every $t$.
\end{restatable}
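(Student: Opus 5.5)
The plan is the standard $L^2$ projection (bias–variance) argument, carried out for each fixed $t$ and then integrated over $t$. Fix $t$ and write $\mathcal{L}^{(t)}(\hat D)\coloneqq\mathbb{E}|\hat D_t(I_t,{\bf z})-{\bf x}_1|^2$, so that $\mathcal{L}_\mu(\hat D)=\int_0^1\mathcal{L}^{(t)}(\hat D)\,{\rm d}t$. Since the minimizer is unrestricted, $\hat D_t$ ranges over all measurable functions of $(I_t,{\bf z})$ with finite second moment, chosen independently for each $t$.

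For any such candidate, add and subtract $\mathbb{E}[{\bf x}_1\mid I_t,{\bf z}]$ inside the norm. The cross term vanishes by the tower property, because $\hat D_t(I_t,{\bf z})-\mathbb{E}[{\bf x}_1\mid I_t,{\bf z}]$ is $\sigma(I_t,{\bf z})$-measurable. This gives
\begin{equation*}
\mathcal{L}^{(t)}(\hat D)=\mathbb{E}\bigl|\hat D_t(I_t,{\bf z})-\mathbb{E}[{\bf x}_1\mid I_t,{\bf z}]\bigr|^2+\mathbb{E}\bigl|{\bf x}_1-\mathbb{E}[{\bf x}_1\mid I_t,{\bf z}]\bigr|^2.
\end{equation*}
Both terms are finite by the second-moment assumption and Jensen's inequality. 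Next, invoke the conditional independence ${\bf z}\perp{\bf x}_1\mid I_t$ to get $\mathbb{E}[{\bf x}_1\mid I_t,{\bf z}]=\mathbb{E}[{\bf x}_1\mid I_t]=D_t(I_t)$ almost surely, by the definition \eqref{eq:denoiser}. To verify this identity, check that $D_t(I_t)$ is $\sigma(I_t,{\bf z})$-measurable and integrates correctly against bounded test functions $g(I_t)h({\bf z})$. Conditional independence gives
\begin{equation*}
\mathbb{E}[{\bf x}_1 g(I_t)h({\bf z})]=\mathbb{E}\bigl[g(I_t)\,\mathbb{E}[{\bf x}_1\mid I_t]\,\mathbb{E}[h({\bf z})\mid I_t]\bigr]=\mathbb{E}[D_t(I_t)g(I_t)h({\bf z})],
\end{equation*}
and a monotone class argument then extends this from product test functions to all bounded $\sigma(I_t,{\bf z})$-measurable functions.

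It follows that $\mathcal{L}_\mu(\hat D)\ge\int_0^1\mathbb{E}|{\bf x}_1-D_t(I_t)|^2{\rm d}t$. The choice $\hat D_t({\bf x},{\bf z})=D_t({\bf x})$ attains this bound, so it is the minimum value. Any minimizer $\bar D$ must therefore satisfy $\int_0^1\mathbb{E}|\bar D_t(I_t,{\bf z})-D_t(I_t)|^2{\rm d}t=0$. Hence the integrand vanishes for almost every $t$, and $\bar D_t(I_t,{\bf z})=D_t(I_t)$ almost surely under the law of $(I_t,{\bf z})$, as claimed.

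The main obstacle is being careful about what $\mu$ means in \eqref{eq:sc_loss_decomposed}. There ${\bf z}$ may be the stop-gradient output of the model itself, so the law of ${\bf z}$ depends on $\hat D$. I would handle this by treating $\mu$, that is, the joint law of $(I_t,{\bf z},{\bf x}_1)$, as fixed when minimizing, which is exactly what the stop-gradient enforces. Under that reading the conditional independence hypothesis holds, because ${\bf z}=\hat D_t(I_t,\boldsymbol{0})$ is a deterministic function of $I_t$, and the delta at $\boldsymbol{0}$ is trivially independent of ${\bf x}_1$. A secondary technical point is measurability of $(t,{\bf x},{\bf z})\mapsto\bar D_t$ so that the $t$-integral makes sense. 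I would simply assume joint measurability as part of the admissible class.
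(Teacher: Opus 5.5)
Your proposal is correct and is essentially the paper's argument: an orthogonal (bias--variance) decomposition of the squared error in which the conditional independence ${\bf z}\perp{\bf x}_1\mid I_t$ gives $\mathbb{E}[{\bf x}_1\mid I_t,{\bf z}]=D_t(I_t)$, so the excess loss is $\mathbb{E}|\hat D_t(I_t,{\bf z})-D_t(I_t)|^2$, followed by integration over $t$. The paper expands directly around $D_t(I_t)$ rather than first around $\mathbb{E}[{\bf x}_1\mid I_t,{\bf z}]$, and it does not spell out three steps that you make explicit: the monotone-class justification of that conditional-expectation identity, the passage from the $t$-integrated minimum to almost every $t$, and the fixed-$\mu$ reading of the stop-gradient.
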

\end{lavenderbox}
A proof is in \Cref{app:proof_optimality}.
In practice, training would not attain the minimum loss and the model would have dependence on ${\bf z}$.
With the self-conditioned loss \eqref{eq:sc_loss_decomposed}, this can encourage a self-correcting behavior that produces an initial prediction and then pulls it closer to the Bayes-optimal prediction.

The generative process under self-conditioning is similar to that of \eqref{eq:flow_velocity}, evolving a sample $\hat{{\bf x}}_{t_i}$ across a flow-time grid $0=t_0<...<t_N=1$.
Yet, the denoising estimate is also maintained and updated as $\hat{{\bf z}}_{t_i}$ to condition and bootstrap the denoiser, altering the numerical integration scheme \eqref{eq:euler} as follows:
\begin{equation}\label{eq:sc_euler}
    \begin{aligned}
        \hat{{\bf x}}_{t_{i+1}} &= \hat{{\bf x}}_{t_i} + (t_{i+1} - t_i) \hat{b}_{t_i},\quad \hat{b}_{t_i} = \frac{\hat{{\bf z}}_{t_{i+1}} - \hat{{\bf x}}_{t_i}}{1-t_i},\quad \hat{{\bf x}}_0\sim p_0,\\
        \hat{{\bf z}}_{t_{i+1}} &= \hat{D}_{t_i}(\hat{{\bf x}}_{t_i}, \hat{{\bf z}}_{t_i}),\quad \hat{{\bf z}}_0 = \boldsymbol{0}.
    \end{aligned}
\end{equation}
Intuitively, this creates an information sharing across flow timesteps via $\hat{\bf z}$ on top of the flow state $\hat{\bf x}$.

Although self-conditioning empirically leads to substantial improvements of flow language models~\citep{chen2026langflow,hu2026elf,batzolis2026towards,meshchaninov2026train,yang2026continuous}, understanding of its mechanism is limited~\citep{chen2022analog,dieleman2022continuous,shabalin2025tencdm,shabalin2026gaussian}.
Furthermore, since the self-conditioned velocity \eqref{eq:sc_euler} is no longer autonomous on the flow state but now also coupled with the updates of the denoising estimates, it is unclear how to use it for few-step distillation, which relies on the existence of the flow map characterized as a purely integral solution of the flow velocity \eqref{eq:flow_map}.
% \cutparagraphup
\subsection{Self-conditioning induces a fixed-point iteration}
\label{sec:formalization}
% \cutparagraphup
We present our key intuition that a self-conditioned denoiser, trained with \eqref{eq:sc_loss_decomposed}, learns to improve its own predictions, implementing an iteration that approximately converges toward the ideal denoiser.

Formally, a self-conditioned denoiser $\hat{D}({\bf x},{\bf z})$, given a choice of flow timestep $t$ and flow state ${\bf x}$, can be used to define the following iteration ${\bf z}^j$ for $j = 0, 1,...$ that starts at some initialization ${\bf z}^0$:
\begin{equation}\label{eq:sc_map}
    {\bf z}^{j+1} = \hat{D}_t({\bf x}, {\bf z}^j).
\end{equation}
As the denoiser learns to correct its own prediction \eqref{eq:sc_loss_decomposed}, we posit that it characterizes a self-correcting process, eventually reaching a fixed point ${\bf z}^\star = \hat{D}_t({\bf x}, {\bf z}^\star)$~\citep{gulrajani2023likelihood}.
We could think of this fixed point as a self-corrected approximation of the Bayes-optimal prediction $D_t({\bf x})$, since it is the learning target as shown in \Cref{prop:optimality}.
We make this intuition precise with a series of theoretical results.

First, we recall the notion of a contraction, which suffices for convergence to a unique fixed point.
\begin{lavenderbox}
    \begin{restatable}[\bf Contraction]{definition}{contraction}\label{def:contraction}
        A function $f:\mathbb{R}^d\to \mathbb{R}^d$ is a contraction on a closed set $O\subseteq \mathbb{R}^d$ with factor $0\leq \eta<1$ if it satisfies $f(O)\subseteq O$ and $|f({\bf z}) - f({\bf z}')| \leq \eta |{\bf z}-{\bf z}'|$ for every ${\bf z},{\bf z}'\in O$.
    \end{restatable}
\end{lavenderbox}
By \Cref{prop:optimality}, a perfect denoiser $\bar{D}_t({\bf x},\cdot)$ with minimum loss maps directly to $D_t({\bf x})$, making it a contraction with $\eta=0$.
When the denoiser $\hat{D}$ is learned, it is not always guaranteed to be contractive.
Yet, in \Cref{app:proof_contractivity}, we show that optimizing self-conditioned loss leads to approximate contractivity.
To simplify our analysis, we shall henceforth assume that the learned denoiser is contractive.
This is in line with common assumptions in looped models~\citep{romano2017little,ryu2019plug,bai2019deep,fung2022jfb,movahedi2026fixed} and supported by our results that self-conditioned models ELF~\citep{hu2026elf} and LangFlow~\citep{chen2026langflow} approximate fixed-point iterations.
We show:
\begin{lavenderbox}
    \begin{restatable}{proposition}{learnedfixedpoint}\label{prop:learnedfixedpoint}
        Fix $t$, ${\bf x}$, and $0\leq\eta < 1$.
        Assume that $\hat{D}_t({\bf x}, \cdot)$ is a contraction on a nonempty closed set $O\subseteq\mathbb{R}^{L\times d}$ with factor $\eta$.
        Then, the iteration ${\bf z}^{j+1} = \hat{D}_t({\bf x}, {\bf z}^j)$ \eqref{eq:sc_map} from any ${\bf z}^0\in O$ satisfies the following properties:
        \begin{enumerate}[label=(\roman*),leftmargin=*]
            \item It converges exponentially to a unique fixed point ${\bf z}^\star\in O$,
            % \begin{equation}
            %     |{\bf z}^{j+1} - {\bf z}^j| \leq \eta^j |{\bf z}^1 - {\bf z}^0|,\quad |{\bf z}_k - {\bf z}_j| \leq \frac{\eta^j}{1-\eta} |{\bf z}^1 - {\bf z}^0|\quad \text{for all}\quad k\geq j\geq 0.
            % \end{equation}
            \item Its denoising error at any iteration $j\geq 0$ is bounded as
            \begin{equation}
                |{\bf z}^j - D_t({\bf x})| \leq |\hat{D}_t({\bf x},{\bf z}^0) - D_t({\bf x})| + \frac{\eta-\eta^j}{1-\eta} |{\bf z}^1 - {\bf z}^0|.
            \end{equation}
            \item Its denoising error at the fixed point is bounded as
            \begin{equation}
                |{\bf z}^\star - D_t({\bf x})| \leq |\hat{D}_t({\bf x},{\bf z}^0) - D_t({\bf x})| + \frac{\eta}{1-\eta} |{\bf z}^1 - {\bf z}^0|,
            \end{equation}
        \end{enumerate}
    \end{restatable}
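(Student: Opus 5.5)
The plan is to prove (i) with the Banach fixed-point theorem, and to get (ii) and (iii) from a triangle inequality anchored at ${\bf z}^1=\hat{D}_t({\bf x},{\bf z}^0)$ plus a geometric-series bound on the increments.

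\textbf{Step 1: convergence, item (i).} Write $f\coloneqq\hat{D}_t({\bf x},\cdot)$. Because $f(O)\subseteq O$, every iterate ${\bf z}^j$ lies in $O$. The contraction property and induction give $|{\bf z}^{j+1}-{\bf z}^j|\leq\eta^j|{\bf z}^1-{\bf z}^0|$.

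Summing the geometric tail gives, for $k\geq j$,
\begin{equation*}
|{\bf z}^k-{\bf z}^j|\leq\frac{\eta^j}{1-\eta}|{\bf z}^1-{\bf z}^0|.
\end{equation*}
So the sequence is Cauchy. $O$ is a closed subset of the complete space $\mathbb{R}^{L\times d}$, so the sequence converges to some ${\bf z}^\star\in O$.

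A contraction is continuous. Passing to the limit in ${\bf z}^{j+1}=f({\bf z}^j)$ therefore gives ${\bf z}^\star=f({\bf z}^\star)$.

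For uniqueness, suppose ${\bf w}\in O$ is another fixed point. Then $|{\bf z}^\star-{\bf w}|\leq\eta|{\bf z}^\star-{\bf w}|$, which forces ${\bf w}={\bf z}^\star$ since $\eta<1$.

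Letting $k\to\infty$ in the tail bound gives $|{\bf z}^\star-{\bf z}^j|\leq\frac{\eta^j}{1-\eta}|{\bf z}^1-{\bf z}^0|$. This is exponential convergence.

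\textbf{Step 2: error at iteration $j$, item (ii).} Apply the triangle inequality with anchor ${\bf z}^1$:
\begin{equation*}
|{\bf z}^j-D_t({\bf x})|\leq|{\bf z}^1-D_t({\bf x})|+|{\bf z}^j-{\bf z}^1|.
\end{equation*}
The first term is exactly $|\hat{D}_t({\bf x},{\bf z}^0)-D_t({\bf x})|$.

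For $j\geq1$, telescope the second term using the increment bound from Step 1:
\begin{equation*}
|{\bf z}^j-{\bf z}^1|\leq\sum_{i=1}^{j-1}\eta^i|{\bf z}^1-{\bf z}^0|=\frac{\eta-\eta^j}{1-\eta}|{\bf z}^1-{\bf z}^0|.
\end{equation*}
This is the claimed bound.

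The case $j=0$ needs separate care, because there the coefficient $\frac{\eta-1}{1-\eta}=-1$ is negative. The claim then reads $|{\bf z}^0-D_t({\bf x})|\leq|{\bf z}^1-D_t({\bf x})|-|{\bf z}^1-{\bf z}^0|$. The reverse triangle inequality gives only the opposite inequality, so this can fail in general. I expect this edge case to be the one real obstacle. My plan is to read the statement for $j\geq1$, where the bound is genuine and which is the intended regime. I would state that restriction explicitly, or note that the $j=0$ case holds only in the degenerate situation where equality holds in the triangle inequality.

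\textbf{Step 3: error at the fixed point, item (iii).} Let $j\to\infty$ in (ii). Since ${\bf z}^j\to{\bf z}^\star$ and the norm is continuous, the bound passes to the limit, and $\eta^j\to0$ yields the coefficient $\frac{\eta}{1-\eta}$.

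Alternatively, apply the triangle inequality directly: $|{\bf z}^\star-D_t({\bf x})|\leq|{\bf z}^1-D_t({\bf x})|+|{\bf z}^\star-{\bf z}^1|$. Then use the limiting tail bound from Step 1 with $j=1$, which gives $|{\bf z}^\star-{\bf z}^1|\leq\frac{\eta}{1-\eta}|{\bf z}^1-{\bf z}^0|$.
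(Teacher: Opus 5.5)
Your argument is correct and is essentially the paper's proof: Banach's theorem plus the telescoped bound $|{\bf z}^j-{\bf z}^k|\le\frac{\eta^k-\eta^j}{1-\eta}|{\bf z}^1-{\bf z}^0|$, a triangle inequality anchored at ${\bf z}^1=\hat{D}_t({\bf x},{\bf z}^0)$ with $k=1$ for (ii), and the limiting tail bound $|{\bf z}^\star-{\bf z}^1|\le\frac{\eta}{1-\eta}|{\bf z}^1-{\bf z}^0|$ for (iii). Your caveat about $j=0$ is also right: the paper's appeal to its lemma with $k=1$ silently requires $j\ge 1$, and the stated bound genuinely fails at $j=0$ (e.g.\ $\hat{D}_t({\bf x},\cdot)\equiv D_t({\bf x})$, so $\eta=0$, with ${\bf z}^0\neq D_t({\bf x})$ makes the right-hand side $-|{\bf z}^1-{\bf z}^0|<0$), so restricting (ii) to $j\ge 1$ is the correct reading.
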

\end{lavenderbox}
A proof is in \Cref{app:proof_learnedfixedpoint}.
% \draft{[todo: discussion: the contraction property controls distribution shift, so that using the single-step self-conditioning is sufficient to control the error over multiple rollouts as we do at inference time; with distribution shift, low error in the objective controls the error between the fixed point denoiser and the optimal denoiser. moreover, the fixed point denoiser is better than the original denoiser.]}
The result shows that the denoising loss at the initialization, together with the magnitude of the first iteration, controls the denoising error bound of multi-step iterations during inference up to the fixed point.
It further shows that the error bound improves with iterations.
% \sh{I am a bit confused about what we want to argue with Proposition 3.3. It does not say that fixed-point iteration improves the initial estimate, but its degradation is bounded.}

Having formulated the relationship between a self-conditioned denoiser $\hat{D}$ and its fixed points for each pair of $t$ and ${\bf x}$, it is convenient to think of a function that directly predicts the fixed point.
We call such a function the \textbf{fixed-point denoiser} $D^\star$, defined as follows for every $t\in[0,1],{\bf x}\in\mathbb{R}^{L\times d}$:
\begin{equation}\label{eq:fp_denoiser}
    D_t^\star({\bf x}) \coloneqq {\bf z}^\star = \hat{D}_t({\bf x}, {\bf z}^\star).
\end{equation}
where ${\bf z}^\star$ is the unique fixed point of $\hat{D}_t({\bf x}, \cdot)$.
The fixed-point denoiser is a self-conditioning-free denoiser that behaves the same as the given self-conditioned denoiser run until convergence.
Its ideal target under \Cref{prop:optimality} is the Bayes-optimal prediction $D_t({\bf x})$, in which case we have $D^\star=D$.
% \cutparagraphup
\subsection{Fixed-point flows}
\label{sec:fpflow}
% \cutparagraphup
In \Cref{sec:sc_flm}, we have seen that self-conditioned flow appears to involve two states: the flow state and the self-conditioning state.
In \Cref{sec:formalization}, we have established that a self-conditioned denoiser induces a fixed-point iteration.
Here, we propose to replace the self-conditioning state by its fixed point.
Once this is done, the velocity depends only on the flow state, so it becomes an ordinary flow.
We call this a \textbf{fixed-point flow}, and prove its fundamental properties as follows.
\begin{lavenderbox}
    \begin{restatable}{proposition}{fpflow}
    \label{prop:fpflow}
    Assume that, for every $t\in[0,1)$ and ${\bf x}$, the self-conditioned denoiser $\hat{D}_t({\bf x},\cdot)$ has a unique fixed point $D^\star_t({\bf x}) = \hat{D}_t({\bf x},D^\star_t({\bf x}))$.
    Define the \textbf{fixed-point velocity}
    \begin{equation}\label{eq:fp_velocity}
        b_t^\star({\bf x}) \coloneqq \frac{D_t^\star({\bf x}) - {\bf x}}{1-t},\quad t\in[0,1).
    \end{equation}
    Then $b_t^\star$ depends only on $t$ and ${\bf x}$, and therefore the dynamics
    \begin{equation}\label{eq:fp_ode}
        \dot{{\bf x}}_t = b_t^\star({\bf x}_t)
    \end{equation}
    define an ordinary time-dependent ODE.
    Furthermore, when the fixed-point denoiser is Bayes optimal, $D^\star=D$, the fixed-point velocity recovers the true velocity, $b^\star = b$.
    \end{restatable}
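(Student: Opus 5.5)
The proof is essentially by construction, and I would proceed in three short steps.

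\textbf{Step 1: $D^\star$ is a genuine function of $(t,{\bf x})$.} By hypothesis, for each $t\in[0,1)$ and ${\bf x}$ the map $\hat{D}_t({\bf x},\cdot)$ has a unique fixed point. The assignment $(t,{\bf x})\mapsto D^\star_t({\bf x})$ in \eqref{eq:fp_denoiser} is therefore single-valued, with no dependence on an initialization ${\bf z}^0$, on a previous denoising estimate $\hat{{\bf z}}_{t_i}$, or on the iteration history. Uniqueness is exactly what removes the path-dependence present in the self-conditioned scheme \eqref{eq:sc_euler}. When uniqueness comes from \Cref{prop:learnedfixedpoint}, the limit of the iteration \eqref{eq:sc_map} is the same for every starting point in $O$, which makes the same point concretely.

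\textbf{Step 2: the velocity is autonomous, so \eqref{eq:fp_ode} is an ordinary ODE.} The fixed-point velocity $b^\star_t({\bf x})=(D^\star_t({\bf x})-{\bf x})/(1-t)$ is a composition of the function from Step 1 with an explicit expression in $(t,{\bf x})$. It is well defined for $t<1$ because the denominator is nonzero. Hence $b^\star:[0,1)\times\mathbb{R}^{L\times d}\to\mathbb{R}^{L\times d}$ is a time-dependent vector field on the flow state alone. The dynamics $\dot{{\bf x}}_t=b^\star_t({\bf x}_t)$ therefore form an ordinary non-autonomous ODE with no auxiliary state ${\bf z}$ coupled to it.

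\textbf{Step 3: recovery of the true velocity.} If $D^\star=D$, then substituting into \eqref{eq:fp_velocity} and comparing with \eqref{eq:denoiser} gives $b^\star_t({\bf x})=(D_t({\bf x})-{\bf x})/(1-t)=b_t({\bf x})$ for all $t\in[0,1)$ and all ${\bf x}$.

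\textbf{Main obstacle: well-posedness.} Existence and uniqueness of solutions, and hence of a flow map via \eqref{eq:flow_map}, would need regularity of $b^\star$, such as Lipschitz continuity in ${\bf x}$. The statement only asserts that the dynamics define an ODE, so I would not prove well-posedness here; I would only note what it requires. If $\hat{D}_t$ is a uniform contraction in ${\bf z}$ with factor $\eta$ and is $L_x$-Lipschitz in ${\bf x}$, a standard fixed-point perturbation argument gives that $D^\star_t$ is $L_x/(1-\eta)$-Lipschitz. It follows that $b^\star_t$ is Lipschitz on $[0,1-\epsilon]$ for every $\epsilon>0$. Picard--Lindel\"of then yields unique trajectories, and so a valid flow map on that interval.
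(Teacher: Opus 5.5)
Your proposal is correct and follows the paper's proof. The paper also argues that once the unique fixed point is taken no conditioning variable remains, so $b^\star_t$ is a function of $(t,{\bf x})$ alone and $\dot{\bf x}_t=b^\star_t({\bf x}_t)$ is an ordinary ODE, and it then obtains $b^\star=b$ by substituting $D^\star=D$ into the denoiser--velocity relation. Your well-posedness remark goes beyond what the paper proves here, since it simply assumes unique solutions when defining the flow map; your $L_x/(1-\eta)$ Lipschitz bound is sound provided the contraction in ${\bf z}$ holds uniformly in ${\bf x}$ on a common closed set containing the fixed points, and Picard--Lindel\"of additionally needs $\hat D$ to be continuous in $t$.
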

\end{lavenderbox}
A proof is in \Cref{app:proof_fpflow}.
Equation~\eqref{eq:fp_velocity} says that at each flow time, the self-conditioned denoiser $\hat{D}$ runs fixed-point iteration \eqref{eq:sc_map} until convergence, and the fixed point modulates the flow via \eqref{eq:fp_velocity}.
This suggests an Euler scheme for sampling $\hat{{\bf x}}_1\sim p_1$ over a flow-time grid $0=t_0<...<t_N=1$:
\begin{equation}
    \hat{{\bf x}}_{t_{i+1}} = \hat{{\bf x}}_{t_i} + (t_{i+1} - t_i) \frac{\hat{{\bf z}}_{t_i}^\star - \hat{{\bf x}}_{t_i}}{1-t_i},
\end{equation}
where each $\hat{{\bf z}}_{t_i}^\star$ is obtained by solving an inner fixed-point iteration from an initialization $\hat{{\bf z}}_{t_i}^0$:
\begin{equation}\label{eq:inner_fpi}
    \hat{{\bf z}}_{t_i}^{j+1} = \hat{D}_{t_i}(\hat{{\bf x}}_{t_i}, \hat{{\bf z}}_{t_i}^j).
\end{equation}
If the fixed-point iteration is run until convergence, how it is initialized must not affect the outcome.
Therefore, a simple and valid approach is to always initialize with zero, $\hat{{\bf z}}_{t_i}^0=\boldsymbol{0}$, which we call \textbf{cold-start sampling} of a fixed-point flow.
On the other hand, choosing the right initialization can improve the efficiency of finding the fixed point.
This can be done, for instance, with the fixed-point estimate of the previous flow timestep, $\hat{{\bf z}}_{t_i}^0 = \hat{{\bf z}}_{t_{i-1}}^\star$.
% If the fixed points are correlated across flow timesteps, this can accelerate convergence.
We call this \textbf{warm-start sampling}, and show:
\begin{lavenderbox}
    \begin{restatable}{proposition}{convergence}\label{prop:convergence}
        In the setting of \Cref{prop:learnedfixedpoint}, if $|{\bf z}^0-{\bf z}^\star|> \varepsilon$, then it is enough to take
        \begin{equation}
            j\geq \frac{\log |{\bf z}^0 - {\bf z}^\star|/\varepsilon}{\log (1/\eta)}
        \end{equation}
        iterations to guarantee $|{\bf z}^j-{\bf z}^\star| \leq \varepsilon$.
        % \begin{equation}
        %     |{\bf z}^j-{\bf z}^\star| \leq \varepsilon.
        % \end{equation}
        % If $|{\bf z}^0-{\bf z}^\star| \leq \varepsilon$, then $j=0$ already suffices.
    \end{restatable}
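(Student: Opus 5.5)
The plan is to derive the a priori geometric error bound $|{\bf z}^j-{\bf z}^\star|\leq \eta^j|{\bf z}^0-{\bf z}^\star|$ and then solve for $j$. We work in the setting of \Cref{prop:learnedfixedpoint}. By part (i) of that proposition, the unique fixed point ${\bf z}^\star\in O$ exists. Since $\hat{D}_t({\bf x},\cdot)$ maps $O$ into itself, every iterate ${\bf z}^j$ stays in $O$.

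First, establish the one-step contraction toward the fixed point. Because ${\bf z}^\star=\hat{D}_t({\bf x},{\bf z}^\star)$, \Cref{def:contraction} gives
\begin{equation*}
|{\bf z}^{j+1}-{\bf z}^\star| = |\hat{D}_t({\bf x},{\bf z}^j)-\hat{D}_t({\bf x},{\bf z}^\star)| \leq \eta\,|{\bf z}^j-{\bf z}^\star|.
\end{equation*}
Both ${\bf z}^j$ and ${\bf z}^\star$ lie in $O$, so the Lipschitz bound applies. A simple induction on $j$ then yields $|{\bf z}^j-{\bf z}^\star|\leq \eta^j|{\bf z}^0-{\bf z}^\star|$ for all $j\geq 0$.

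Second, it suffices to make $\eta^j|{\bf z}^0-{\bf z}^\star|\leq\varepsilon$. If $\eta=0$, then already ${\bf z}^1={\bf z}^\star$. In that case we read the bound with $\log(1/\eta)=\infty$, so the threshold becomes $0$ and any $j\geq 1$ works; we treat this as a trivial edge case separately. For $0<\eta<1$, we have $\log(1/\eta)>0$. The hypothesis $|{\bf z}^0-{\bf z}^\star|>\varepsilon$ ensures the threshold is positive. Since $\log$ is monotone, the condition $\eta^j|{\bf z}^0-{\bf z}^\star|\leq\varepsilon$ is equivalent to $j\log(1/\eta)\geq \log(|{\bf z}^0-{\bf z}^\star|/\varepsilon)$, which is exactly the stated bound on $j$.

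There is no real obstacle. The only points needing care are the edge case $\eta=0$ and checking that all iterates remain in $O$, so that the contraction inequality is legitimately applicable at every step.
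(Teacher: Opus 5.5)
Your proposal is correct and matches the paper's proof: both derive the geometric bound $|{\bf z}^j-{\bf z}^\star|\leq\eta^j|{\bf z}^0-{\bf z}^\star|$ from the contraction property (the paper takes it from \Cref{lem:geometric_convergence}) and then solve for $j$ by taking logarithms with $\log\eta<0$. Your separate treatment of $\eta=0$ is a small refinement the paper skips, since it implicitly assumes $\eta>0$. At $\eta=0$ the literal threshold degenerates to $j\geq 0$, which fails at $j=0$, so one needs $j\geq 1$ there.
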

\end{lavenderbox}
A proof is given in \Cref{app:proof_convergence}.
The result shows that initializing closer to the fixed point improves convergence, which is the case for warm-start sampling if the fixed points are correlated locally in $t$.
% We find that while warm start mostly helps, it can slow down convergence near $t=0$, which is explained by the a phase transition therein~\citep{ambrogioni2023statistical,biroli2024dynamical} that can make the fixed points are less correlated.

With this, we can view conventional self-conditioned generation in \eqref{eq:sc_euler} as warm-start sampling that approximates fixed points with a single iteration, $\hat{{\bf z}}_{t_i}^\star\approx \hat{D}_{t_i}(\hat{{\bf x}}_{t_i},\hat{{\bf z}}_{t_{i-1}}^\star)$.
This implies that the coupling of flow state and self-conditioning state in \eqref{eq:sc_euler} is not a defining property of self-conditioned flows, as it is merely a byproduct of warm starts, a sampling heuristic.
We indeed find that cold-start sampling with sufficient fixed-point iterations can replace warm starts in ELF~\citep{hu2026elf} and LangFlow~\citep{chen2026langflow}.

Taking this idea further, we find that it is possible to take a self-conditioned flow model and turn it into a self-conditioning-free model by distilling the self-conditioned denoiser $\hat{D}$ into the fixed-point denoiser $D^\star$.
This can be done, for instance, via the following fixed-point distillation loss:
\begin{equation}
    \mathcal{L}(D^\star)
    = \int_0^1 \mathbb{E}|D_t^\star(I_t) - {\bf z}^\star|^2\,{\rm d}t, \quad {\bf z}^\star = \hat{D}_t(I_t,{\bf z}^\star),
    \label{eq:fp-z}
\end{equation}
where the target ${\bf z}^\star$ is estimated by iterating ${\bf z}^{j+1}=\hat{D}_t(I_t,{\bf z}^j)$ from ${\bf z}^0=\boldsymbol{0}$.
Any other fixed-point distillation method can be used instead of \eqref{eq:fp-z}, such as consistency distillation~\citep{lin2026consistency}.
The resulting fixed-point velocity $b^\star$ is autonomous, so its generation can use the usual Euler scheme~\eqref{eq:euler}.
% We find that this yields self-conditioning-free flow language models that perform on par with self-conditioned ones.

%% file: sections/distillation.tex
\subsection{Fixed-point flow maps}
\label{sec:flowmap}
In \Cref{sec:fpflow}, we have formalized fixed-point flows, which express self-conditioned flows via ordinary velocity fields.
We leverage this insight to define their associated flow maps, which allows us to design few-step distillation methods for self-conditioned flow language models.
\begin{lavenderbox}
    \begin{restatable}{proposition}{fpflowmap}\label{prop:fpflowmap}
        On any interval where the fixed-point flow ODE \eqref{eq:fp_ode} has unique solutions, the \textbf{fixed-point flow map}, its solution operator $X_{s,t}^\star({\bf x}_s) = {\bf x}_t$, satisfies
        \begin{equation}\label{eq:fixedpointflowmap}
            X^\star_{s,t}({\bf x})={\bf x} + \int_s^t b_\tau^\star({\bf x}_\tau)\,{\rm d}\tau.
        \end{equation}
        Moreover, for $0\leq s\leq u\leq t< 1$,
        \begin{equation}\label{eq:fpfm-semigroup}
            X_{s,t}^\star = X_{u,t}^\star \circ X_{s,u}^\star.
        \end{equation}
        % \begin{equation}
        %     \begin{array}{rll}
        %         X^\star_{s,s}({\bf x}) &= {\bf x} \quad &\text{for all} \quad {\bf x} \in \mathbb{R}^{L \times |V|},\; s \in [0,1], \\
        %         \lim_{s\to t}\partial_t X^\star_{s,t}({\bf x}) &= b_t^\star({\bf x})
        %     \quad &\text{for all} \quad {\bf x} \in \mathbb{R}^{L \times |V|},\; (s, t) \in [0,1]^2,\\
        %         X_{u,t}^\star(X_{s,u}^\star({\bf x})) &= X_{s,t}^\star({\bf x})
        %     \quad &\text{for all} \quad {\bf x} \in \mathbb{R}^{L \times |V|},\; (s, u, t) \in [0,1]^3.
        %     \end{array}
        % \end{equation}
        % % The fixed-point flow map satisfies the following conditions:
        
        % \textit{(i)} Boundary condition. \begin{equation}
        %     X^\star_{s,s}({\bf x}) = {\bf x} \quad \text{for all} \quad {\bf x} \in \mathbb{R}^{L \times |V|},\; s \in [0,1], 
        % \end{equation}

        % \textit{(ii)} Tangent condition. 
        % \begin{equation}
        %     \lim_{s\to t}\partial_t X^\star_{s,t}({\bf x}) = b_t^\star({\bf x})
        %     \quad \text{for all} \quad {\bf x} \in \mathbb{R}^{L \times |V|},\; (s, t) \in [0,1]^2, 
        % \end{equation}
        % Moreover, for $0\leq s\leq u\leq t< 1$,
        % \begin{equation}\label{eq:fpfm-semigroup}
        %     X_{s,t}^\star = X_{u,t}^\star \circ X_{s,u}^\star.
        %     \quad \text{for all} \quad {\bf x} \in \mathbb{R}^{L \times |V|},\; (s, u, t) \in [0,1]^3, 
        % \end{equation}
    \end{restatable}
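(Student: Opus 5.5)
The plan is to reduce everything to standard ODE theory, using \Cref{prop:fpflow}. That result shows the fixed-point velocity $b^\star_t({\bf x})$ of \eqref{eq:fp_velocity} depends only on $(t,{\bf x})$, so \eqref{eq:fp_ode} is an ordinary non-autonomous ODE. On an interval $J\subseteq[0,1)$ where it has unique solutions, I would define $X^\star_{s,t}({\bf x})$ for $s,t\in J$ as the value at time $t$ of the unique solution $\tau\mapsto{\bf x}_\tau$ with ${\bf x}_s={\bf x}$. The restriction $t<1$ is needed because $b^\star_t$ carries the factor $(1-t)^{-1}$ and is only defined on $[0,1)$.

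\textbf{Integral identity.} The solution ${\bf x}_\tau$ is differentiable with $\dot{\bf x}_\tau=b^\star_\tau({\bf x}_\tau)$. By the fundamental theorem of calculus,
\begin{equation*}
{\bf x}_t-{\bf x}_s=\int_s^t \dot{\bf x}_\tau\,{\rm d}\tau=\int_s^t b^\star_\tau({\bf x}_\tau)\,{\rm d}\tau .
\end{equation*}
Substituting ${\bf x}_s={\bf x}$ and ${\bf x}_t=X^\star_{s,t}({\bf x})$ gives \eqref{eq:fixedpointflowmap}. To justify this I need $\tau\mapsto b^\star_\tau({\bf x}_\tau)$ to be integrable on $[s,t]$. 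I would take this to be part of what "solution" means in the hypothesis. Either the solution is $C^1$, so its derivative is continuous on the compact interval $[s,t]\subset[0,1)$, or the solution is understood in the Carathéodory sense, where the integral form is the definition.

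\textbf{Semigroup property.} Fix $s\le u\le t<1$ in $J$ and let ${\bf x}_\tau$ be the solution with ${\bf x}_s={\bf x}$. Its restriction to $[u,t]$ solves \eqref{eq:fp_ode} with initial value ${\bf x}_u=X^\star_{s,u}({\bf x})$ at time $u$. The function $\tau\mapsto X^\star_{u,\tau}({\bf x}_u)$ solves the same initial value problem. By uniqueness the two agree on $[u,t]$, so
\begin{equation*}
X^\star_{s,t}({\bf x})={\bf x}_t=X^\star_{u,t}({\bf x}_u)=X^\star_{u,t}\bigl(X^\star_{s,u}({\bf x})\bigr),
\end{equation*}
which is \eqref{eq:fpfm-semigroup}. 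As an alternative I could split the integral in \eqref{eq:fixedpointflowmap} at $u$, but that still needs uniqueness to identify the tail of the trajectory with the flow started at $u$.

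\textbf{Main obstacle.} The only delicate point is making sure $X^\star$ is well-defined. This means existence of solutions on all of $[s,t]$, with no blow-up before time $t$, and uniqueness, both of which the statement places inside its hypothesis. I would add a remark that a sufficient condition is that $D^\star_t$ be locally Lipschitz in ${\bf x}$ with linear growth, uniformly on compact subsets of $[0,1)$. Picard--Lindel\"of then supplies both existence and uniqueness. One route to the Lipschitz property is the implicit function theorem applied to ${\bf z}=\hat D_t({\bf x},{\bf z})$, using the contraction factor $\eta<1$ from \Cref{prop:learnedfixedpoint}. I would not rely on that route, though, since the proposition as stated assumes uniqueness directly.
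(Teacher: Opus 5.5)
Your proposal is correct and follows the same route as the paper: define $X^\star_{s,t}$ as the solution operator of \eqref{eq:fp_ode}, which is an ordinary ODE in ${\bf x}$ alone by \Cref{prop:fpflow}, and obtain the composition law from uniqueness of the solution restarted at time $u$. You give more detail than the paper, which treats the integral identity as immediate and does not discuss integrability or existence up to time $t$, but the argument is the same.
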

\end{lavenderbox}
A proof is in \Cref{app:proof_fpflowmap}.
The result shows that $X^\star$ is a valid flow map and admits conditions such as \eqref{eq:fpfm-semigroup} that can be leveraged to learn it through distillation from the velocity $b_t^\star$.
%
% As with any flow map of an autonomous field, it is fully characterized by a \emph{boundary}, a \emph{tangent}, and a \emph{semigroup} condition~\citep{boffi2026build}, which together form the basis for distillation.
In order to learn the flow map, we parameterize it with a \textbf{two-time denoiser} $\delta_{s,t}$ defined as follows~\citep{lee2026flow,lu2026one}:
\begin{equation}\label{eq:twotimedenoiser}
    \delta_{s,t}({\bf x}) \coloneqq {\bf x} + (1-s) v_{s,t}({\bf x}),
\end{equation}
where $v_{s,t}$ is the average velocity from $s$ to $t$, defined as follows~\citep{boffi2026build}:
\begin{equation}
    v_{s,t}({\bf x}) \coloneqq \frac{X_{s,t}^\star({\bf x}) - {\bf x}}{t-s},\quad v_{t,t}({\bf x})\coloneqq b_t^\star({\bf x}).
\end{equation}
Analogous to the relationship between the single-time denoiser and velocity, $D_t^\star({\bf x}) = {\bf x} + (1-t)b_t^\star({\bf x})$, the two-time denoiser can be viewed as a single denoising step to the clean data domain.
Its outputs are known to lie in a low-dimensional manifold~\citep{lee2026flow,lu2026one}, making learning easier.
We now show:
\begin{lavenderbox}
    \begin{restatable}{proposition}{twotimedenoiser}\label{prop:twotimedenoiser}
        The two-time denoiser $\delta_{s,t}$ satisfies the following properties:
        % \begin{enumerate}[label=(\roman*),leftmargin=*]
        %     \item The flow map can be recovered exactly,
        %     \begin{equation}
        %         X_{s,t}^\star({\bf x}) = \frac{1-t}{1-s}{\bf x} + \frac{t-s}{1-s}\delta_{s,t}({\bf x}),
        %         \label{eq:fp-recovery}
        %     \end{equation}
        %     \item The boundary condition is satisfied by construction,
        %     \item The tangent condition translates into a diagonal condition,
        %     \begin{equation}
        %         \delta_{s,s}({\bf x}) = D_s^\star({\bf x}), \label{eq:fp-diagonal}\\
        %     \end{equation}
        %     \item The semigroup condition translates into
        %     \begin{equation}
        %         \delta_{s,t}({\bf x}) = \gamma\,\delta_{s,u}({\bf x}) + (1-\gamma)\,\delta_{u,t}(X_{s,u}^\star({\bf x})),\quad \gamma = \tfrac{(1-t)(u-s)}{(1-u)(t-s)}\in[0,1]. \label{eq:fp-semigroup}
        %     \end{equation}
        % \end{enumerate}
\begin{enumerate}[label=(\roman*),leftmargin=*]
\item For \(0\le s<t<1\), the flow map is recovered from \(\delta_{s,t}\) by
\begin{equation}\label{eq:fm-recovery}
X_{s,t}^\star({\bf x})
=
\frac{1-t}{1-s}{\bf x}
+
\frac{t-s}{1-s}\delta_{s,t}({\bf x}).
\end{equation}

\item On the diagonal,
\begin{equation}\label{eq:fm-diagonal}
\delta_{t,t}({\bf x})=D_t^\star({\bf x}).
\end{equation}

\item For \(0\le s<u<t<1\), the semigroup condition \eqref{eq:fpfm-semigroup} is equivalent to
\begin{equation}\label{eq:fm-semigroup}
\delta_{s,t}({\bf x})
=
\gamma \delta_{s,u}({\bf x})
+
(1-\gamma)\delta_{u,t}(X_{s,u}^\star({\bf x})),
\end{equation}
where $\gamma=\tfrac{(1-t)(u-s)}{(1-u)(t-s)}\in[0,1]$.
\end{enumerate}
\end{restatable}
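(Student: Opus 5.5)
The plan is to unfold the definitions \eqref{eq:twotimedenoiser} and the average velocity $v_{s,t}$, and to reduce each item to elementary algebra together with \Cref{prop:fpflowmap}.

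For (i), fix $0\le s<t<1$. By definition, $v_{s,t}({\bf x}) = (X^\star_{s,t}({\bf x})-{\bf x})/(t-s)$, so $\delta_{s,t}({\bf x}) = {\bf x} + \tfrac{1-s}{t-s}(X^\star_{s,t}({\bf x})-{\bf x})$. Solving this linear relation for $X^\star_{s,t}({\bf x})$ gives
\[
X^\star_{s,t}({\bf x}) = {\bf x} + \tfrac{t-s}{1-s}(\delta_{s,t}({\bf x})-{\bf x}),
\]
which rearranges to \eqref{eq:fm-recovery}.

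For (ii), on the diagonal $v_{t,t}=b^\star_t$ by convention, so $\delta_{t,t}({\bf x}) = {\bf x}+(1-t)b^\star_t({\bf x})$. Substituting the definition \eqref{eq:fp_velocity} of $b^\star_t$ then gives $D^\star_t({\bf x})$ immediately. I would also remark that this convention is consistent with continuity: by \eqref{eq:fixedpointflowmap} and continuity of $b^\star$, $v_{s,t}\to b^\star_t$ as $s\to t$. The identity itself, however, is definitional.

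For (iii), I would show equivalence by rewriting both sides of \eqref{eq:fpfm-semigroup} through (i). Write ${\bf y}=X^\star_{s,u}({\bf x}) = \tfrac{1-u}{1-s}{\bf x}+\tfrac{u-s}{1-s}\delta_{s,u}({\bf x})$. Then
\[
X^\star_{u,t}({\bf y}) = \tfrac{1-t}{1-u}{\bf y} + \tfrac{t-u}{1-u}\delta_{u,t}({\bf y}),
\]
while $X^\star_{s,t}({\bf x}) = \tfrac{1-t}{1-s}{\bf x}+\tfrac{t-s}{1-s}\delta_{s,t}({\bf x})$. Expanding, the ${\bf x}$ coefficient on the composed side is $\tfrac{1-t}{1-u}\cdot\tfrac{1-u}{1-s}=\tfrac{1-t}{1-s}$, so the ${\bf x}$ terms cancel exactly. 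What remains is
\[
\tfrac{t-s}{1-s}\delta_{s,t}({\bf x}) = \tfrac{(1-t)(u-s)}{(1-u)(1-s)}\delta_{s,u}({\bf x}) + \tfrac{t-u}{1-u}\delta_{u,t}({\bf y}).
\]
Multiplying by $\tfrac{1-s}{t-s}>0$ gives coefficients $\gamma=\tfrac{(1-t)(u-s)}{(1-u)(t-s)}$ and $\tfrac{(t-u)(1-s)}{(1-u)(t-s)}$. Since every step is a reversible multiplication by a nonzero scalar, the two statements are equivalent for each ${\bf x}$.

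The only real checks, and the place I expect the main (minor) obstacle, are the following.
\begin{itemize}
\item The second coefficient equals $1-\gamma$. This follows from the identity $(1-u)(t-s)-(1-t)(u-s)=(t-u)(1-s)$, which I would verify by direct expansion: both sides equal $t-s-ut+us$ minus the terms that cancel.
\item $\gamma\in[0,1]$. Nonnegativity is clear since $s<u<t<1$. The bound $\gamma\le1$ follows from the same identity, because $(t-u)(1-s)\ge0$.
\end{itemize}
Throughout, I would invoke \Cref{prop:fpflowmap} only for the semigroup property and the well-definedness of $X^\star$ on the interval where solutions are unique.
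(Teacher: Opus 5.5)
Your proposal is correct and follows essentially the same route as the paper's proof. You unfold the definition of $\delta_{s,t}$ to get the recovery formula, read off the diagonal identity from $v_{t,t}=b^\star_t$, and derive (iii) by substituting (i) into both sides of the semigroup law, cancelling the ${\bf x}$-terms and rescaling by $(1-s)/(t-s)$. Your explicit checks that $1-\gamma=\tfrac{(t-u)(1-s)}{(1-u)(t-s)}$ and that $\gamma\in[0,1]$ are details the paper only asserts.
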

\end{lavenderbox}
A proof is in \Cref{app:proof_twotimedenoiser}.
% When the data lies on the simplex the semigroup teacher $\bar{\boldsymbol\delta}_{s,t}$ remains on the simplex as a convex combination of simplex points, so the squared terms in~\eqref{eq:fp-distill} can be replaced by KL divergences, recovering the cross-entropy distillation objective of~\citet{lee2026flow}.
The result shows that learning the two-time denoiser yields the flow map, and that the two-time denoiser matches the fixed-point denoiser $D^\star$ \eqref{eq:fp_denoiser} on the diagonal $s=t$, while satisfying a self-consistency criterion off the diagonal.
These conditions lead to a learning objective:
\begin{lavenderbox}
    \begin{restatable}{proposition}{twotimedenoisertraining}\label{prop:twotimedenoisertraining}
        Consider the population objective
        \begin{equation}\label{eq:fp-distill}
            \mathcal L(\delta)
            =
            \mathbb E_{s,u,t,{\bf x}}
            |\delta_{s,t}({\bf x})-\mathsf{sg}(\mathcal T(\delta)_{s,u,t}({\bf x}))|^2
            +
            \mathbb E_{t,{\bf x}}
            |\delta_{t,t}({\bf x})-D_t^\star({\bf x})|^2
        \end{equation}
        where
        \begin{equation}
            \mathcal T(\delta)_{s,u,t}({\bf x})
            :=
            \gamma \delta_{s,u}({\bf x})
            +
            (1-\gamma)
            \delta_{u,t}\bigl(X_{s,u}^\star({\bf x})\bigr).
        \end{equation}
        
        % Then the true two-time denoiser \(\delta^\star\) has zero loss.
        Then the true two-time denoiser has zero loss.
        % \begin{equation}
        %     \mathcal L(\delta^\star)=0.
        % \end{equation}
        % Conversely, any zero-loss solution satisfies the diagonal condition \eqref{eq:fm-diagonal}
        % \begin{equation}
        %     \delta_{t,t}({\bf x})=D_t^\star({\bf x})
        % \end{equation}
        % almost surely under the diagonal training distribution, and the semigroup condition \eqref{eq:fm-semigroup}
        % \begin{equation}
        %     \delta_{s,t}({\bf x})
        %     =
        %     \gamma \delta_{s,u}({\bf x})
        %     +
        %     (1-\gamma)
        %     \delta_{u,t}\bigl(X_{s,u}^\star({\bf x})\bigr)        
        % \end{equation}
        % almost surely under the semigroup training distribution.
        Conversely, any zero-loss solution satisfies the diagonal \eqref{eq:fm-diagonal} and semigroup \eqref{eq:fm-semigroup} conditions almost surely under the training distribution.
        % The two-time denoiser $\delta_{s,t}$ is the unique minimizer of the following objective:
        % \begin{equation}\label{eq:fp-distill}
        %     \begin{aligned}
        %         \mathcal{L}(\delta)
        %         &\coloneqq \mathbb{E}_{s,u,t}\mathbb{E}_{{\bf x}_0,{\bf x}_1}|\delta_{s,t}(I_s) - \mathsf{sg}(\bar{\delta}_{s,t})|^2
        %         + \mathbb{E}_t\mathbb{E}_{{\bf x}_0,{\bf x}_1}|\delta_{t,t}(I_t) - D_t^\star(I_t)|^2,
        %         \\
        %         \bar{\delta}_{s,t} &\coloneqq \gamma\delta_{s,u}(I_s) + (1-\gamma)\delta_{u,t}(X_{s,u}^\star(I_s)),
        %     \end{aligned}
        % \end{equation}
        % where $\bar{\delta}_{s,t}$ is the semigroup teacher, and where the expectation over $(s,u,t)$ is with respect to a distribution that has full support on $\{0\le s\le u\le t\le 1\}$.
    \end{restatable}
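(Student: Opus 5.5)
The proof splits into the two directions of \Cref{prop:twotimedenoisertraining}, and both rest on \Cref{prop:twotimedenoiser}. For the first direction, let $\delta^{\rm true}_{s,t}({\bf x}) \coloneqq {\bf x} + (1-s)v_{s,t}({\bf x})$ be the two-time denoiser built from the fixed-point flow map $X^\star$ of \Cref{prop:fpflowmap}.

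\emph{Diagonal term.} By item (ii) of \Cref{prop:twotimedenoiser}, $\delta^{\rm true}_{t,t} = D^\star_t$ pointwise. The second expectation in \eqref{eq:fp-distill} is therefore the integral of zero and vanishes.

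\emph{Semigroup term.} Fix $s<u<t<1$ and ${\bf x}$. The semigroup property \eqref{eq:fpfm-semigroup} holds for $X^\star$, so by item (iii) of \Cref{prop:twotimedenoiser} we have $\delta^{\rm true}_{s,t}({\bf x}) = \mathcal T(\delta^{\rm true})_{s,u,t}({\bf x})$. The stop-gradient does not change the value of the target; it only affects gradients. Hence every integrand in the first expectation is zero, and $\mathcal L(\delta^{\rm true})=0$.

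Two degenerate cases need checking. If $s=u$ or $u=t$, then $\gamma\in\{0,1\}$ and the identity collapses to a trivial one, using $X^\star_{s,s}=\mathrm{id}$. The case $s=t$ is covered by the diagonal convention $v_{t,t}=b^\star_t$, which gives $\delta_{t,t}={\bf x}+(1-t)b_t^\star({\bf x}) = D^\star_t({\bf x})$. I will either restrict the sampling distribution to $s\le u\le t<1$ or verify these boundary cases directly.

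For the converse, suppose $\mathcal L(\delta)=0$. Both terms are expectations of nonnegative quantities, so each integrand vanishes almost surely under its sampling law. Explicitly, $|\delta_{t,t}({\bf x})-D_t^\star({\bf x})|^2=0$ holds a.s., which is the diagonal condition \eqref{eq:fm-diagonal}. Likewise, $\delta_{s,t}({\bf x}) = \mathcal T(\delta)_{s,u,t}({\bf x})$ holds a.s., which is exactly \eqref{eq:fm-semigroup} with $X^\star_{s,u}$ as it appears in $\mathcal T$. At zero loss the stop-gradient target is just the numerical value of $\mathcal T(\delta)$, so the equality is between $\delta$ and its own transform.

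The only delicate point is interpretive: the claimed conditions hold only almost surely under the training distribution, not pointwise. I will state the conclusion with exactly that qualifier and not attempt to conclude that $\delta$ equals the true two-time denoiser, since that would require full support and continuity assumptions not in the statement. I also need $\gamma\in[0,1]$ well defined, which needs $u<1$. This follows from $u\le t<1$ and was already established in \Cref{prop:twotimedenoiser}.
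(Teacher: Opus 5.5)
Your proposal is correct and takes the same approach as the paper. In the forward direction you use the diagonal and semigroup identities of \Cref{prop:twotimedenoiser} to make both integrands vanish, and in the converse you use nonnegativity of the squared terms to get each identity almost surely under its sampling law; your added handling of the degenerate cases $s=u$, $u=t$, $s=t$ and of the stop-gradient covers points the paper leaves implicit.
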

\end{lavenderbox}
% 
% The diagonal condition~\eqref{eq:fp-diagonal} requires $\delta_{s,t}$ to predict the fixed point ${\bf z}_t^\star$ on $s=t$, recovering the velocity through~\eqref{eq:fp-recovery}, while the semigroup condition~\eqref{eq:fp-semigroup} enforces consistency of successive jumps off the diagonal.
% 
A proof is given in \Cref{app:proof_twotimedenoisertraining}.
The first loss term enforces the semigroup condition, and the second loss term anchors the diagonal to the fixed point $D_t^\star(I_t) = {\bf z}^\star = \hat{D}_t(I_t,{\bf z}^\star)$.
Evaluating the latter requires fixed points ${\bf z}^\star$, which we may make available in one of two ways:
an offline route that first distills the fixed-point denoiser $D^\star$ as a separate model via \eqref{eq:fp-z}, or an online route that performs two compressions in \eqref{eq:fp-z} and \eqref{eq:fp-distill} jointly, estimating ${\bf z}^\star$ with a few iterations of ${\bf z}^{j+1} = \hat{D}_t(I_t,{\bf z}^j)$ from ${\bf z}^0=\boldsymbol{0}$.
The online approach ultimately achieves $\delta_{t,t}\approx D_t^\star$ without training a separate model.
% \cutparagraphup

%% file: sections/experiments.tex
\section{Experiments}
\label{sec:exp}
% \cutparagraphup
% In this section, we answer the following questions with empirical studies:

Through our experiments, we aim to answer the following key questions:
% \cutparagraphup
\begin{enumerate}[label={\bf Q\arabic*}.,leftmargin=*]
    \item Do self-conditioned flow language models learn fixed-point iterations?
    \item Is self-conditioned generation merely offering better initialization of fixed points?
    \item Can we train self-conditioning-free flows competitive with self-conditioned ones?
    \item Can we train a flow map language model that leverages self-conditioning?
\end{enumerate}

% The studies are conducted on the following setup with the baselines.
% \cutparagraphup
\paragraph{Setup.}
We conduct all experiments on the OpenWebText dataset~\citep{Gokaslan2019OpenWeb}, a standard corpus for language modeling, with a sequence length of 1024 tokens.
We assess generation quality with generative perplexity (gPPL)~\citep{dieleman2022continuous, sahoo2025duo} measured with pretrained GPT-2 Large~\citep{radford2019language}, together with average per-sample unigram entropy.
We regard a model as strong only when its generations attain low gPPL while remaining close to the data entropy of 5.44 nats \citep{lee2026flow}.
We use 256 generated samples for analysis (\Cref{subsec:analysis}) and 1024 for distillation (\Cref{subsec:distill}).
See \Cref{app:impl} for more experimental details.

\subsection{Analyzing self-conditioned flow language models} % (Q1, Q2)
\label{subsec:analysis}
% \cutparagraphup
To answer whether self-conditioned denoisers induce fixed-point iterations (\textbf{Q1}) and whether the gains of self-conditioned sampling come from improved initialization of fixed-point iterations
% whether a warm-start provides a good initialization
(\textbf{Q2}), we study ELF \citep{hu2026elf} and LangFlow \citep{chen2026langflow} as representatives of self-conditioned flow language models on learned embeddings and one-hot token embeddings, respectively.
% To answer whether self-conditioned denoisers induce fixed-point iterations (\textbf{Q1}) and to understand the initialization quality of warm start (\textbf{Q2}), we study ELF \citep{hu2026elf} and LangFlow \citep{chen2026langflow} as representatives of self-conditioned flow language models on learned embeddings and one-hot token embeddings, respectively.
We draw gPPL-entropy frontier curves \citep{pynadath2025candi} by sweeping sampling parameter $\gamma$ for ELF and softmax temperature for LangFlow.
% \cutparagraphup

\paragraph{Self-conditioning induces a fixed-point iteration (Q1).}
We first ask whether self-conditioned denoisers learn fixed-point iteration~\eqref{eq:sc_map}.
Following~\citet{lin2026consistency}, we track relative distance to the fixed point, $|{\bf z}^j - {\bf z}^\star|/|{\bf z}^\star|$, which equals one at $j=0$ when cold-start sampling is used (${\bf z}^0={\bf 0}$).
% 
% Along the iteration, we measure the relative distance to the fixed point $\|{\bf z}^j - {\bf z}^\star\|/\|{\bf z}^\star\|$, which tracks the convergence of the iterates to ${\bf z}^\star$, following the experimental setup in \cite{lin2026consistency}. \draft{[jk: why this metric?]} 
To approximate ${\bf z}^\star$, we run 200 damped Picard iterations~\citep{lauriere2021numerical} with a fixed damping parameter $0.3$ (see \Cref{app:analysis_detail} for details).
As shown in \Cref{fig:refine-contractiveness}, the relative distance decays across flow time $t$ for both models.
This supports that the models have approximately learned a fixed point iteration as posited in \Cref{sec:formalization}.

We now ask whether the fixed-point iteration converges towards the ideal denoiser (\Cref{sec:formalization}).
%
% On both ELF and LangFlow, gPPL decreases as the number of fixed-point iterations increases, while entropy stays close to that of the data (\Cref{tab:refinement}), indicating that more iterations toward the fixed point yield a better denoiser.
As illustrated in \Cref{tab:refinement}, gPPL decreases as the number of fixed-point iterations increases on both ELF and LangFlow, whereas entropy remains stable near the data distribution (5.44 nats), showing that taking more steps toward the fixed point produces a better denoiser, agreeing with \Cref{prop:learnedfixedpoint}.

\begin{figure}[t!]
% \begin{figure}[t]
  \centering
  % \vspace{-3em}
  \resizebox{0.75\linewidth}{!}{%
    \begin{subfigure}[t]{0.43\linewidth}
      \centering
      \includegraphics[width=\linewidth,keepaspectratio]{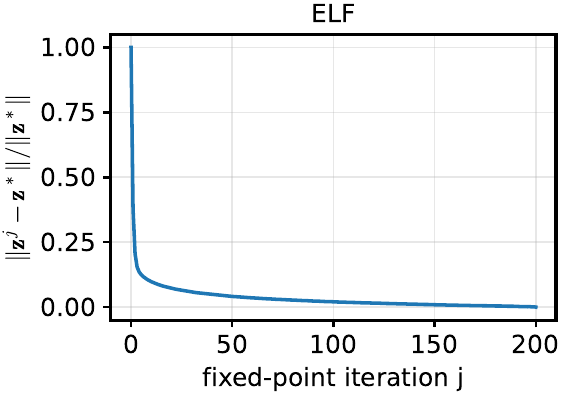}
    \end{subfigure}\hfill
    \hspace{0.5cm}
    \begin{subfigure}[t]{0.43\linewidth}
      \centering
      \includegraphics[width=\linewidth,keepaspectratio]{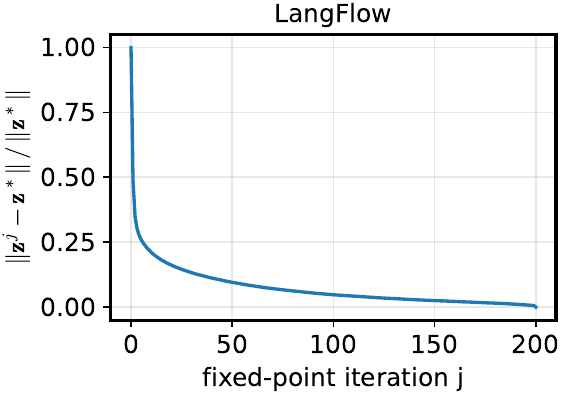}
    \end{subfigure}%
  }
  % \vspace{-1em}
  \caption{
  % Relative distance $\lVert \mathbf{z}^j - \mathbf{z}^*\rVert /\lVert \mathbf{z}^*\rVert$ decays along the fixed-point iteration $j$, indicating convergence toward a fixed point.
  Convergence towards the fixed point across fixed-point iterations.
  }
  \label{fig:refine-contractiveness}
  % \vspace{-0.1in}
\end{figure}

\begin{table}[t!]
  \centering
  \captionof{table}{Effect of fixed-point iterations. Increasing the iterations enhances generation quality.}
  % \vspace{-1em}
  \small
  \begin{tabular}{lccccc}
    \toprule
    ELF (32 steps), \# FPIs & $1$ & $2$ & $3$ & $4$ & $5$ \\
    \midrule
    gPPL $\downarrow$ & 101.19 & 66.50 & 56.67 & 50.25 & 46.31 \\
    entropy              & 5.51 & 5.42 & 5.37 & 5.33 & 5.31 \\
    \toprule
    LangFlow (64 steps), \# FPIs & $1$ & $2$ & $3$ & $4$ & $5$ \\
    \midrule
    gPPL $\downarrow$ & 77.26 & 57.99 & 52.63 & 49.37 & 47.27 \\
    entropy              & 5.49 & 5.42 & 5.40 & 5.38 & 5.36 \\
    \bottomrule
  \end{tabular}
  \label{tab:refinement}
  % \vspace{0.1cm}
\end{table}

\begin{figure}[t!]
  \centering
  % \vspace{-0.15in}
  \resizebox{0.75\linewidth}{!}{%
      \begin{subfigure}[t]{0.43\linewidth}
        \centering
        \includegraphics[width=\linewidth,keepaspectratio]{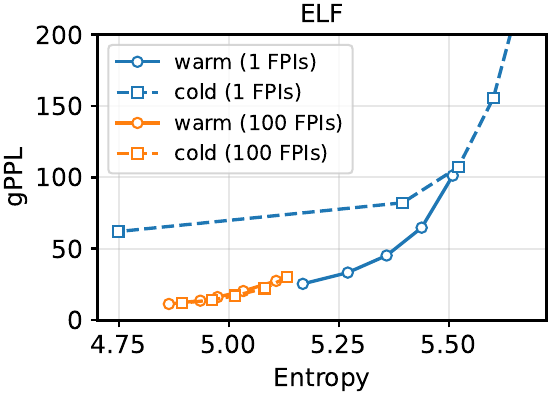}
        % \caption{ELF}
      \end{subfigure}\hfill
      \hspace{0.5cm}
      \begin{subfigure}[t]{0.43\linewidth}
        \centering
        \includegraphics[width=\linewidth,keepaspectratio]{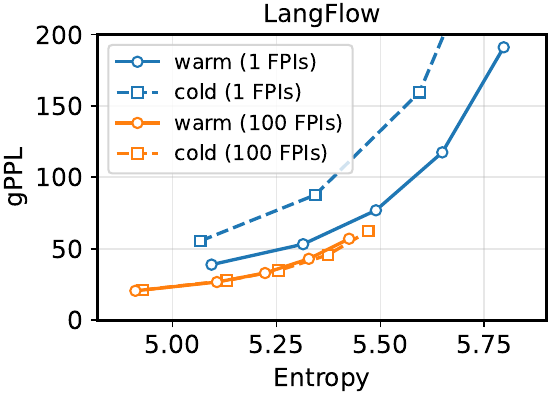}
      \end{subfigure}
  }
  \vspace{-0.05in}
  % \captionof{figure}{
  \caption{
  % Comparison of warm-start and cold-start with 1 and 100 FPIs. While warm-start outperforms cold-start with 1 FPI as it provides a better initialization, with a sufficient number of FPIs, both lie on the same frontier.
  Warm-start and cold-start sampling with 1 and 100 fixed-point iterations.
  % Warm start surpasses with one iteration, but both converge to the same frontier with 100 iterations.
  }
  \label{fig:warm_vs_cold_j8}
  \vspace{-0.4cm}
% \end{figure}
\end{figure}

\paragraph{Self-conditioned generation merely offers a better initialization (Q2).}
In \Cref{sec:fpflow}, we viewed conventional generation of self-conditioned flows as a warm-start sampling scheme that attempts to use a better initialization for the fixed-point iteration.
Accordingly, we conjectured that cold-start sampling would reach a similar performance if sufficient iterations are used as it would eventually find the fixed point regardless of initialization.
% In \Cref{sec:fpflow}, we stated that warm-start yields a better initialization for the fixed-point iteration while the fixed-point itself is independent of the choice of initial point.
% 
To validate this, we draw the gPPL–entropy frontier of each sampling scheme in~\Cref{fig:warm_vs_cold_j8}.
When we use a single fixed-point iteration, warm-start outperforms cold-start sampling, demonstrating that warm-start offers a good initialization around the fixed point.
With 100 iterations, however, warm- and cold-start attain the same frontier, indicating that the choice of initialization has a negligible effect on generation once enough fixed-point iterations are run.

\subsection{Distilling self-conditioning into fixed-point flow maps} % (Q3, Q4)
\label{subsec:distill}
% \cutparagraphup
To test self-conditioning-free distillation (\textbf{Q3}) and self-conditioned flow map distillation (\textbf{Q4}), we train ELF~\citep{hu2026elf} as a teacher model, replacing its encoder with GPT-2 Large \citep{radford2019language} for tokenizer-matched comparison with baselines.
% To validate that the self-conditioned model can be distilled into a self-conditioning-free model (\textbf{Q3}) and demonstrate a FMLM$^\star$ that leverages self-conditioning (\textbf{Q4}), as a teacher model, we reproduced an ELF-B model while replacing its T5-encoder~\citep{} into GPT-2-Large~\citep{radford2019language} to compare with other baselines.
% we conduct the experiments with an ELF-B model with a pretrained GPT-2~\cite{radford2019language} encoder to compare with other baselines.

\begin{table}[t!]
  \centering
  \small
  \caption{Comparison with few-step language models on OpenWebText. FMLM$^\star$ outperforms all baselines that preserve data-level entropy (5.44 nats).}
  % \draft{add the comment that the scores is from DFM paper?}
  \label{tab:onestep}
  \begin{tabular}{cc|ccccccc|c}
    \toprule
    \multirow{2}{*}{\vspace{-0.1cm}Step} & \multirow{2}{*}{\vspace{-0.1cm}Metric} & \multicolumn{2}{c}{Duo} & \multicolumn{2}{c}{MDLM} & \multirow{2}{*}{\vspace{-0.1cm}FMLM} & \multicolumn{2}{c|}{DFM} & \multirow{2}{*}{\vspace{-0.1cm}FMLM$^\star$} \\
    \cmidrule(lr){3-4} \cmidrule(lr){5-6} \cmidrule(lr){8-9} 
     & & DCD & Di4C & SDTT & Di4C & & PSD & ESD & \\
    \midrule
    \multirow{2}{*}{1} & gPPL $\downarrow$ & 5743.29 & 370.51 & 1260.86 & 1298.80 & 168.30 & 180.29 & 5.33 & \textbf{112.52}  \\
                               & entropy & 6.02 & \textcolor{red}{3.92} & 5.26 & 5.29 & 5.17 & 4.91 & \textcolor{red}{0.26} & 5.37 \\
    \midrule
    \multirow{2}{*}{2} & gPPL $\downarrow$ & 891.16 & 210.22 & 877.22 & 758.23 & 133.29 & 152.83 & 108.91 & \textbf{94.74} \\
                               & entropy & 5.41 & 4.63 & 5.34 & 5.35 & 5.25 & 5.03 & 5.15 & 5.45 \\
    \midrule
    \multirow{2}{*}{4} & gPPL $\downarrow$ & 250.86 & 154.67 & 339.73 & 239.27 & 111.31 & 122.32 & 77.08 & \textbf{75.22} \\
                               & entropy & 5.37 & 4.85 & 5.38 & 5.40 & 5.26 & 5.10 & 5.27 & 5.41 \\
    \bottomrule
  \end{tabular}
\end{table}

\begin{wrapfigure}[12]{r}{0.33\textwidth}
    \centering
    \includegraphics[width=\linewidth,keepaspectratio]{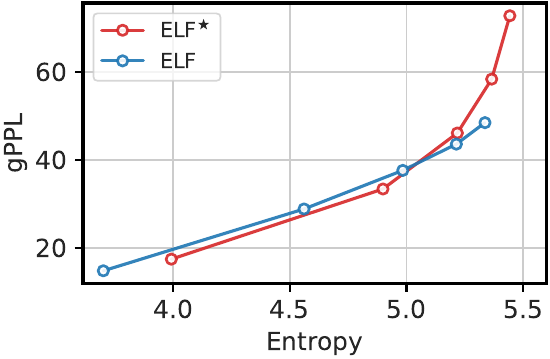}
    \vspace{-0.7cm}
    \caption{Comparison between ELF$^\star$ and its teacher model ELF.
    % ELF (FP) matches the teacher's frontier and improves it in the high-entropy band.
    }
    \label{fig:coldelf}
\end{wrapfigure}

\paragraph{Self-conditioning is removable (Q3).}
We find that, given a self-conditioned denoiser $\hat D$ (ELF in our case), we can learn its associated fixed-point denoiser $D^\star$ \eqref{eq:fp_denoiser} through fixed-point distillation, which yields fixed-point velocity \eqref{eq:fp_velocity}, a self-conditioning-free model.
Herein, we consider CDEQ~\citep{lin2026consistency}, an existing fixed-point distillation method.
% objective in \eqref{eq:fp-z}, or by adopting existing distillation methods for fixed-point iterations~\citep{lin2026consistency}.
%
% To validate this distillation into a self-conditioning-free model, 
Using the method, we distill the ELF teacher into a self-conditioning-free model \textbf{ELF$^\star$}, with
% using the CDEQ distillation procedure~\citep{lin2026consistency}.
%
implementation details provided in \Cref{app:impl}.
We compare 8-step generation performance of ELF$^\star$ against 32-step generation of the teacher with gPPL-entropy frontier curves.
%
% As shown in \Cref{fig:coldelf}, ELF (FP) matches the teacher's frontier and improves on it in the high-entropy region, confirming that the self-conditioned model can be distilled into a self-conditioning-free one without degrading generation quality.
As in \Cref{fig:coldelf}, ELF$^\star$ matches the teacher's frontier, confirming that self-conditioning is removable via fixed-point distillation without degrading quality.
% ed model can be distilled into a self-conditioning-free one without degrading generation quality.
% \draft{TODO: discussion here.}

\begin{wrapfigure}[10]{r}{0.33\textwidth}
    \centering
    \vspace{-0.6cm}
    \includegraphics[width=\linewidth,keepaspectratio]{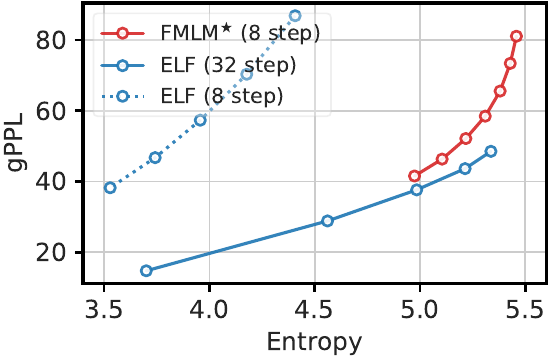}
    \vspace{-0.6cm}
    \caption{Comparison between FMLM$^\star$ and teacher model ELF.}
    \label{fig:scfmlm-staged}
\end{wrapfigure}
% \paragraph{Baselines.}
% We compare FMLM$^\star$ against recent few-step distillation methods for language models: Duo with DCD~\citep{sahoo2025duo}, MDLM with SDTT~\citep{deschenaux2024beyond}, and both with Di4C~\citep{hayakawa2024distillation}, based on discrete diffusion; and  FMLM~\citep{lee2026flow} and DFM~\citep{potaptchik2026discrete}, based on continuous flow maps.
% % We compare FMLM$^\star$ against recent few-step language generative models: the distilled discrete diffusion methods Duo with DCD~\citep{sahoo2025duo} and MDLM with SDTT~\citep{deschenaux2024beyond}, both also distilled with Di4C~\citep{hayakawa2024distillation}, and the continuous flow map models FMLM~\citep{lee2026flow} and DFM~\citep{potaptchik2026discrete}.
\paragraph{Self-conditioning is distillable into a flow map (Q4).}
Finally, we ask whether self-conditioning can be leveraged to train a few-step flow map.
To this end, we distill the self-conditioned ELF teacher into a self-conditioned flow map language model \textbf{FMLM}$^\star$, parameterized by the two-time denoiser $\delta_{s,t}$~\eqref{eq:twotimedenoiser}.
Following the offline route proposed in \Cref{sec:flowmap}, we take the fixed-point denoiser ELF$^\star$ as teacher and learn its associated fixed-point flow map with the semigroup distillation objective~\eqref{eq:fp-distill}.
% learn the flow map with the semigroup distillation objective~\eqref{eq:fp-distill}, yielding the two-stage FMLM$^\star$.
%
As illustrated in \Cref{fig:scfmlm-staged}, FMLM$^\star$ with 8 generation steps approaches the 32-step frontier of the self-conditioned teacher ELF.

We then compare FMLM$^\star$ in one-step and few-step generation settings on OpenWebText against recent few-step distillation baselines: discrete diffusion models Duo with DCD~\citep{sahoo2025duo}, MDLM with SDTT~\citep{deschenaux2024beyond}, and both with Di4C~\citep{hayakawa2024distillation}, and continuous flow map language models FMLM~\citep{lee2026flow} and DFM~\citep{potaptchik2026discrete}, reporting gPPL and entropy.
As shown in \Cref{tab:onestep}, FMLM$^\star$ attains the best gPPL among baselines that preserve data-level entropy (5.44 nats) in the one- and few-step regimes, advancing the state-of-the-art under a matched sampling budget.
Qualitative samples can be found in \Cref{app:samples}.

% \begin{wraptable}{r}{0.33\textwidth}
%   \centering
%   \caption{\draft{Training time comparison between offline and online distillation.}}
%   \label{tab:trainingtimes}
%   \begin{tabular}{c|cc}
%     \toprule
%     \multicolumn{2}{c}{FMLM$^\star$} & Training time \\
%     \midrule
%     \multirow{6}{*}{\# FPIs} & 1 & 17.6h \\
%     & 2 & 17.9h \\
%     & 3 & 18.4h \\
%     & 5 & 20.3h \\
%     & 9 & 24.3h \\
%     & 17 & 32.6h \\
%     \midrule
%     \multicolumn{2}{c}{FP} & 82.6h \\
%     \bottomrule
%   \end{tabular}
% \end{wraptable}

\begin{table}[t!]
  \centering
  \small
  % \vspace{-1em}
  \caption{Comparison of FMLM$^\star$ with online and offline fixed-point distillation. The performance of online distillation saturates around 9 fixed-point iterations and nearly matches offline distillation.}
  \label{tab:nrefinemaps}
  \begin{tabular}{cc|cccccc|c}
    \toprule
    \multirow{2}{*}{Step} & \multirow{2}{*}{Metric} & \multicolumn{6}{|c|}{\# FPIs for online distillation} & \multirow{2}{*}{Offline} \\
    \cmidrule(lr){3-8}
     & & 1 & 2 & 3 & 5 & 9 & 17 & \\
    \midrule
    \multirow{2}{*}{1} & gPPL $\downarrow$ & 176.31 & 141.06 & 134.49 & 121.44 & 116.74 & 118.26 & \textbf{112.52} \\
                               & entropy & 5.47 & 5.42 & 5.42 & 5.41 & 5.39 & 5.38 & 5.37 \\
    \midrule
    \multirow{2}{*}{2} & gPPL $\downarrow$ & 149.51 & 114.85 & 109.29 & 99.98 & 97.44 & 98.37 & \textbf{94.74} \\
                               & entropy & 5.53 & 5.50 & 5.49 & 5.49 & 5.47 & 5.47 & 5.45 \\
    \midrule
    \multirow{2}{*}{4} & gPPL $\downarrow$ & 120.35 & 90.75 & 88.20 & 80.27 & 79.87 & 80.05 & \textbf{75.22} \\
                               & entropy & 5.46 & 5.45 & 5.46 & 5.46 & 5.45 & 5.45 & 5.41 \\
    \midrule
    \multicolumn{2}{c|}{Training times} & 18h & 18h & 19h & 20h & 24h & 33h & 82h \\
    \bottomrule
  \end{tabular}
  % \vspace{-0.3cm}
  % \vspace{-0.3cm}
\end{table}

To reduce training cost, we also consider online distillation, which avoids training a separate fixed-point denoiser (ELF$^\star$ in our case). FMLM$^\star$ instead distills directly from the self-conditioned teacher (ELF) using a fixed number of cold-start fixed-point iterations at each training step.
As shown in \Cref{tab:nrefinemaps}, its performance saturates around 9 FPIs, and the resulting online-distilled FMLM$^\star$ achieves competitive quality at approximately $0.3\times$ the training cost of offline-distilled FMLM$^\star$.
Together, these results show that self-conditioning can be leveraged to build a state-of-the-art few-step flow map through either two-stage offline distillation or cheaper one-stage online distillation (\textbf{Q4}).

%% file: sections/conclusion.tex
\section{Conclusion}
% \cutparagraphup
\label{sec:conclusion}
We demonstrated that the self-conditioned flow language model implicitly learns a fixed-point iteration that refines its own denoising estimate, approximately converging toward the ideal denoiser.
Such convergence is provable under a contractivity assumption, and we observe it empirically in existing self-conditioned models.
Based on this viewpoint, we proposed fixed-point flows, a class of self-conditioned flows organized along two axes: the flow over time and the fixed-point iteration at each flow time, which also offers an understanding of conventional self-conditioned sampling as a single-step warm-started fixed-point iteration.
Building on fixed-point flows, we proposed a method to distill self-conditioned flows into a flow map language model FMLM$^\star$ that enables few-step generation while inheriting the excellent performance of self-conditioned flows.
On OpenWebText, FMLM$^\star$ achieves state-of-the-art one- and few-step generation under matched sampling budgets.

\paragraph{Limitations and future work.}
We restrict our flow map learning to the distillation setting which requires a separate teacher model (e.g., ELF or ELF$^\star$).
A promising next step is extending to self-distillation that only uses data supervision, simultaneously learning the self-conditioned denoiser, fixed-point denoiser, and two-time denoiser in a single model.
In addition, we have limited our scope to text data, although our formulation is general and would apply to other modalities such as graphs, images, and videos.
Understanding self-conditioning in these modalities would prove interesting.

% While effective, FMLM$^\star$ is inherently bounded by the capabilities of the teacher network.
% Unlike certain self-distillation methods~\citep{boffi2026build} that directly leverage real data supervision, FMLM$^\star$ cannot yet surpass its teacher's performance.
% We expect that extending FMLM$^\star$ to a self-distillation framework represents a promising avenue for future work.
% By utilizing direct data supervision without a fixed teacher, the model can potentially drive itself closer to the ideal denoiser.

%% file: sections/appendix_proofs.tex
\section{Proofs}
\label{app:proofs}

\subsection{Proof of Proposition~\ref{prop:optimality}}
\label{app:proof_optimality}

\begin{lavenderbox}
    \optimality*
\end{lavenderbox}

\begin{proof}
Fix \(t\) for which the conditional independence assumption holds, and write
\[
D\coloneqq D_t(I_t),
\qquad
U\coloneqq \hat D_t(I_t,{\bf z}).
\]
Then
\[
{\bf x}_1-U=({\bf x}_1-D)+(D-U).
\]
Squaring and taking expectations gives
\[
\mathbb{E}|{\bf x}_1-U|^2
=
\mathbb{E}|{\bf x}_1-D|^2
+
\mathbb{E}|U-D|^2
+
2\mathbb{E}\langle {\bf x}_1-D,D-U\rangle.
\]
The cross term vanishes. Indeed,
\[
\mathbb{E}[{\bf x}_1-D\mid I_t,{\bf z}]
=
\mathbb{E}[{\bf x}_1\mid I_t,{\bf z}]-D_t(I_t).
\]
By the conditional independence assumption,
\[
\mathbb{E}[{\bf x}_1\mid I_t,{\bf z}]
=
\mathbb{E}[{\bf x}_1\mid I_t]
=
D_t(I_t).
\]
Hence
\[
\mathbb{E}[{\bf x}_1-D\mid I_t,{\bf z}]=0,
\]
and therefore
\[
\mathbb{E}\langle {\bf x}_1-D,D-U\rangle=0.
\]
Thus
\[
\mathbb{E}|\hat D_t(I_t,{\bf z})-{\bf x}_1|^2
=
\mathbb{E}|{\bf x}_1-D_t(I_t)|^2
+
\mathbb{E}|\hat D_t(I_t,{\bf z})-D_t(I_t)|^2.
\]
The first term does not depend on \(\hat D_t\). The second term is minimized exactly when
\[
\hat D_t(I_t,{\bf z})=D_t(I_t)
\]
almost surely under the law of \((I_t,{\bf z})\). Integrating over \(t\) gives the claim.
\end{proof}

\subsection{Proof of approximate contractivity}
\label{app:proof_contractivity}

We show that learning the self-conditioned objective leads to an approximate notion of contractivity.

\begin{lavenderbox}
    \begin{definition}[\bf A simplified self-conditioned regression.]
        Let $O\subseteq\mathbb{R}^d$ be a nonempty and closed set, let $Z\sim \mu$ with $\mu(O) = 1$, let $X\in\mathbb{R}^d$ be fixed, and let $f:\mathbb{R}^d\to\mathbb{R}^d$ be a continuous function.
        For some $0 < p < 1$, we define the following self-conditioned regression objective:
        \begin{equation}\label{eq:simplified_self_conditioned_regression}
            \mathcal{L}(f)\coloneqq p\mathbb{E}|f(Z) - X|^2 + (1-p)\mathbb{E}|f(f(Z)) - X|^2.
        \end{equation}
        \end{definition}
\end{lavenderbox}
% We note that \eqref{eq:simplified_self_conditioned_regression} can be expressed as
% \begin{equation}
%     \mathcal{L}(f) = \mathbb{E}|f(\tilde{Z}) - X|^2
% \end{equation}
% where $\tilde{Z}$ is distributed according to a mixture of the laws of the initial input $Z$ and the self-conditioning input $f(Z)$ with the respective weights $p$ and $1-p$.
This is a simplification of our problem setting in \Cref{sec:sc_flm}.
Taking $p=1/2$, $f = \hat{D}_t(I_t,\cdot)$, and $X = D_t({\bf x})$ (the Bayes-optimal prediction target; \Cref{prop:optimality}), for fixed $t\in[0,1]$ and ${\bf x}$, we recover a practical learning scenario for self-conditioned language models similar to \eqref{eq:sc_loss_decomposed}.

Let us fix
\begin{itemize}[leftmargin=*]
    \item a scale $w>0$,
    \item a factor $0\leq\eta<1$,
    \item a tolerance $r>0$,
    \item and a failure probability $0<\varepsilon<1$.
\end{itemize}
Let us assume:
\begin{equation}\label{eq:approx_contractivity_assumption}
    \bar{B}(X,r) \subseteq O,\quad 2r\leq \eta w,\quad \mathcal{L}(f)\leq \min\{p, 1-p\}\varepsilon r^2.
\end{equation}
Define
\begin{equation}
    A \coloneqq \{{\bf z}\in O:|f({\bf z}) - X|\leq r\}.
\end{equation}

The following lemmas show that a sufficiently small loss yields a closed, high-probability set $A$ that is contractive with factor $\eta$ above the scale $w$, and approximately forward-invariant under the training distribution with leakage at most $\varepsilon$.
Intuitively, the first term in the self-conditioned loss makes the set $A$ high-probability under $\mu$, whereas the second term makes $f(Z)$ remain in $A$ for most $Z\sim \mu$.
\begin{lavenderbox}
    \begin{lemma}[\bf Closedness and approximate forward invariance]\label{lem:approx_contractivity_1}
        $A$ is a closed subset of $\mathbb{R}^d$.
        Furthermore, $f(A)\subseteq O$, and it holds that
        \begin{equation}
            \mu(A\cap f^{-1}(A)) \geq 1-\varepsilon.
        \end{equation}
        In particular, $\mu(A)\geq 1-\varepsilon$ and
        \begin{equation}
            \mu(\{{\bf z}\in A: f({\bf z})\notin A\}) \leq \varepsilon.
        \end{equation}
    \end{lemma}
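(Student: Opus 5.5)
Closedness comes first. $A$ is the intersection of the closed set $O$ with $f^{-1}(\bar{B}(X,r))$. The latter is the preimage of a closed ball under the continuous map $f$, so it is closed, and hence $A$ is closed. For the inclusion $f(A)\subseteq O$, take any ${\bf z}\in A$. Then $|f({\bf z})-X|\leq r$, so $f({\bf z})\in\bar{B}(X,r)\subseteq O$ by the first assumption in \eqref{eq:approx_contractivity_assumption}.

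The probabilistic bound follows from Markov's inequality applied to each of the two loss terms separately. Since $\mathcal{L}(f)\leq \min\{p,1-p\}\varepsilon r^2$ and both terms of \eqref{eq:simplified_self_conditioned_regression} are nonnegative, we get
\[
\mathbb{E}|f(Z)-X|^2 \leq \varepsilon r^2, \qquad \mathbb{E}|f(f(Z))-X|^2\leq \varepsilon r^2 .
\]
Strict Markov, in the form $\Pr(Y>r^2)\leq \mathbb{E}Y/r^2$, then gives
\[
\mu(\{|f(Z)-X|>r\})\leq\varepsilon, \qquad \mu(\{|f(f(Z))-X|>r\})\leq\varepsilon .
\]

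The obstacle is that a plain union bound only yields $1-2\varepsilon$, while the target is $1-\varepsilon$. The tool here is a weighted bound with weights $p$ and $1-p$. Let $E_1=\{|f(Z)-X|>r\}$ and $E_2=\{|f(f(Z))-X|>r\}$. Then
\[
\mu(E_1\cup E_2)\leq \mu(E_1)+\mu(E_2)\leq \frac{\mathbb{E}|f(Z)-X|^2+\mathbb{E}|f(f(Z))-X|^2}{r^2}\leq \frac{\mathcal{L}(f)}{\min\{p,1-p\}\,r^2}\leq\varepsilon .
\]
The third inequality holds because $\mathcal{L}(f)\geq \min\{p,1-p\}\bigl(\mathbb{E}|f(Z)-X|^2+\mathbb{E}|f(f(Z))-X|^2\bigr)$. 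This single Markov step on the summed variable removes the factor of $2$.

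On the complement of $E_1\cup E_2$, intersected with $\{Z\in O\}$ (a set of full $\mu$-measure since $\mu(O)=1$), three facts hold. First, $Z\in A$. Second, $f(Z)\in\bar{B}(X,r)\subseteq O$. Third, $|f(f(Z))-X|\leq r$, so $f(Z)\in A$. Hence $\mu(A\cap f^{-1}(A))\geq 1-\varepsilon$.

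The two "in particular" statements are then immediate. Monotonicity gives $\mu(A)\geq\mu(A\cap f^{-1}(A))\geq 1-\varepsilon$. For the leakage bound, $\{{\bf z}\in A: f({\bf z})\notin A\}$ lies inside the complement of $A\cap f^{-1}(A)$, so its measure is at most $\varepsilon$. Measurability of all these sets is automatic, since $A$ is closed and $f$ is continuous.
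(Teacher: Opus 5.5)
Your proof is correct and follows essentially the same route as the paper. Closedness comes from $A=O\cap f^{-1}(\bar{B}(X,r))$, and $f(A)\subseteq O$ from $f(A)\subseteq\bar{B}(X,r)\subseteq O$. The probabilistic bound is a Markov-type estimate in which $\mathcal{L}(f)$ dominates $\min\{p,1-p\}\,r^2$ times the probability of the bad event, which is what avoids the factor of $2$ from a naive union bound. The only differences are cosmetic: the paper bounds the indicator of the bad set pointwise by $\bigl(p|f(\mathbf{z})-X|^2+(1-p)|f(f(\mathbf{z}))-X|^2\bigr)/(\min\{p,1-p\}r^2)$, whereas you apply Markov to the unweighted sum and then invoke $\mathcal{L}(f)\geq\min\{p,1-p\}(\mathbb{E}|f(Z)-X|^2+\mathbb{E}|f(f(Z))-X|^2)$; you also make explicit the use of $\mu(O)=1$, which the paper leaves implicit.
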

\end{lavenderbox}

\begin{proof}
    The closed Euclidean ball $\bar{B}(X, r)$ is closed.
    Since $f$ is continuous,
    \[
    f^{-1}(\bar{B}(X, r))
    \]
    is closed.
    Therefore,
    \[
    A = O\cap f^{-1}(\bar{B}(X, r))
    \]
    is closed because $O$ is closed.

    For every ${\bf z}\in A$, we have, by definition,
    \[
    |f({\bf z}) - X| \leq r.
    \]
    Hence
    \[
    f({\bf z})\in \bar{B}(X, r).
    \]
    By assumption \eqref{eq:approx_contractivity_assumption},
    \[
    f(A)\subseteq O.
    \]

    Define the initial and self-conditioned errors
    \[
    e_1({\bf z})\coloneqq |f({\bf z})-X|,\quad e_2({\bf z})\coloneqq |f(f({\bf z}))-X|.
    \]
    and define
    \[
    G \coloneqq \{{\bf z}\in O: e_1({\bf z})\leq r \text{ and } e_2({\bf z})\leq r\}.
    \]
    The condition $e_1({\bf z})\leq r$ is exactly ${\bf z}\in A$.

    Moreover, as we have seen, $e_1({\bf z})\leq r$ implies $f({\bf z})\in O$.
    Therefore,
    \[
    e_2({\bf z})\leq r
    \]
    is equivalent to
    \[
    f({\bf z})\in A.
    \]
    Consequently,
    \[
    G = A\cap f^{-1}(A).
    \]

    On the complement $G^c$, at least one of the inequalities
    \[
    e_1({\bf z}) \leq r,\quad e_2({\bf z}) \leq r
    \]
    fails.

    If $e_1({\bf z}) > r$, then
    \[
    p e_1({\bf z})^2 > pr^2 \geq \min\{p, 1-p\}r^2.
    \]
    If $e_2({\bf z}) > r$, then
    \[
    (1-p) e_2({\bf z})^2 > (1-p)r^2 \geq \min\{p, 1-p\}r^2.
    \]
    Thus, for every ${\bf z}\notin G$,
    \[
    pe_1({\bf z})^2 + (1-p)e_2({\bf z})^2 \geq \min\{p, 1-p\}r^2.
    \]
    Equivalently,
    \[
    \boldsymbol{1}({\bf z}\notin G) \leq \frac{pe_1({\bf z})^2 + (1-p)e_2({\bf z})^2}{\min\{p, 1-p\}r^2}.
    \]
    Taking expectation with respect to $Z\sim \mu$,
    \[
    \mu(G^c)\leq \frac{\mathbb{E}[pe_1({\bf z})^2 + (1-p)e_2({\bf z})^2]}{\min\{p, 1-p\}r^2} = \frac{\mathcal{L}(f)}{\min\{p, 1-p\}r^2}.
    \]
    By assumption \eqref{eq:approx_contractivity_assumption},
    \[
    \mu(G^c)\leq \varepsilon.
    \]
    Therefore,
    \[
    \mu(G)\geq 1 - \varepsilon.
    \]
    Since $G = A\cap f^{-1}(A)$,
    \[
    \mu(A\cap f^{-1}(A))\geq 1 - \varepsilon.
    \]
    This implies $\mu(A)\geq 1 - \varepsilon$ because $A\cap f^{-1}(A)\subseteq A$.

    It also implies
    \[
    \mu(A\setminus f^{-1}(A))\leq \varepsilon,
    \]
    because the event
    \[
    A\setminus f^{-1}(A) = \{{\bf z}\in A:f({\bf z})\notin A\}
    \]
    is contained in $G^c$.
\end{proof}

\begin{lavenderbox}
    \begin{lemma}[\bf Approximate contractivity]\label{lem:approx_contractivity_2}
        For every ${\bf z},{\bf z}'\in A$ satisfying $|{\bf z} - {\bf z}'| \geq w$, one has
        \begin{equation}
            |f({\bf z}) - f({\bf z}')|\leq \eta |{\bf z} -{\bf z}'|.
        \end{equation}
    \end{lemma}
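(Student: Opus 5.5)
The plan is a direct triangle-inequality argument that uses only the definition of $A$ and the assumption $2r\leq \eta w$ from \eqref{eq:approx_contractivity_assumption}. Neither continuity of $f$ nor \Cref{lem:approx_contractivity_1} is needed here, because $A$ already forces the outputs of $f$ to be localized near $X$.

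\textbf{Step 1: outputs on $A$ are close to each other.} Take ${\bf z},{\bf z}'\in A$. By the definition of $A$,
\[
|f({\bf z}) - X|\leq r,\qquad |f({\bf z}')-X|\leq r.
\]
The triangle inequality through $X$ then gives
\[
|f({\bf z})-f({\bf z}')| \leq |f({\bf z})-X| + |X - f({\bf z}')| \leq 2r.
\]

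\textbf{Step 2: inputs are far apart.} The statement assumes $|{\bf z}-{\bf z}'|\geq w$. Combining this with $2r\leq \eta w$, and using $\eta\geq 0$, we get
\[
2r\leq \eta w \leq \eta|{\bf z}-{\bf z}'|.
\]
Chaining this with Step 1 yields $|f({\bf z})-f({\bf z}')|\leq \eta|{\bf z}-{\bf z}'|$, which is the claim.

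There is no genuine obstacle. The only point needing care is to confirm that the scale assumption is used in the right direction: the threshold $w$ is exactly what turns the absolute bound $2r$ into a relative bound $\eta|{\bf z}-{\bf z}'|$. This also explains why the lemma is stated only \emph{above the scale $w$}. Below that scale, $2r$ may exceed $\eta|{\bf z}-{\bf z}'|$, and nothing about local Lipschitz behavior of $f$ can be inferred from a small loss alone.
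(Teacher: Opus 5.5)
Your proof is correct and follows the paper's argument exactly: the triangle inequality through $X$ gives $|f({\bf z})-f({\bf z}')|\leq 2r$, and you then chain $2r\leq \eta w\leq \eta|{\bf z}-{\bf z}'|$ using the separation assumption. You are also right that continuity of $f$ and Lemma~\ref{lem:approx_contractivity_1} play no role here.
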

\end{lavenderbox}

\begin{proof}
    Take arbitrary ${\bf z},{\bf z}'\in A$.
    By definition of $A$,
    \[
    |f({\bf z}) - X| \leq r,\quad |f({\bf z}') - X| \leq r.
    \]
    By the triangle inequality,
    \[
    |f({\bf z}) - f({\bf z}')| \leq |f({\bf z}) - X| + |f({\bf z}') - X| \leq 2r.
    \]
    By assumption,
    \[
    2r \leq \eta w.
    \]
    Therefore, whenever $|{\bf z} - {\bf z}'| \geq w$,
    \[
    |f({\bf z}) - f({\bf z}')| \leq 2r \leq \eta w \leq \eta|{\bf z} - {\bf z}'|.
    \]
    Hence
    \[
    |f({\bf z}) - f({\bf z}')|\leq \eta |{\bf z} -{\bf z}'|
    \]
    for every pair ${\bf z},{\bf z}'\in A$ separated by at least $w$.
\end{proof}

\subsection{Proof of Proposition~\ref{prop:learnedfixedpoint}}
\label{app:proof_learnedfixedpoint}

We first show a useful lemma.
\begin{lavenderbox}
    \begin{lemma}\label{lem:geometric_convergence}
        Let $f:\mathbb{R}^d\to\mathbb{R}^d$ be a contraction on a nonempty closed set $O\subseteq\mathbb{R}^d$ with factor $0\leq\eta<1$.
        Then, there is a unique fixed point ${\bf z}^\star\in O$, to which the iteration
        \[
        {\bf z}^{j+1} = f({\bf z}^j)
        \]
        from any ${\bf z}^0\in O$ converges exponentially.
        Furthermore, the following holds for any finite $j\geq k\geq 0$:
        \[
        |{\bf z}^j - {\bf z}^k| \leq \frac{\eta^k-\eta^j}{1-\eta} |{\bf z}^1 - {\bf z}^0|.
        \]
        Similarly, the following holds:
        \[
        |{\bf z}^\star - {\bf z}^k| \leq \frac{\eta^k}{1-\eta} |{\bf z}^1 - {\bf z}^0|.
        \]
    \end{lemma}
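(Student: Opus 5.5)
This is the Banach fixed-point theorem, with explicit constants; I would argue directly from \Cref{def:contraction}.

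\textbf{Step 1: successive differences.} Since $f(O)\subseteq O$ and ${\bf z}^0\in O$, induction gives ${\bf z}^j\in O$ for all $j$. Hence the contraction inequality applies to every pair of consecutive iterates, and induction on $i$ yields
\[
|{\bf z}^{i+1}-{\bf z}^i|\le \eta^i|{\bf z}^1-{\bf z}^0| .
\]

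\textbf{Step 2: the distance bound.} For $j\ge k$, I telescope and use the triangle inequality:
\[
|{\bf z}^j-{\bf z}^k|\le\sum_{i=k}^{j-1}|{\bf z}^{i+1}-{\bf z}^i|\le |{\bf z}^1-{\bf z}^0|\sum_{i=k}^{j-1}\eta^i .
\]
The finite geometric sum equals $(\eta^k-\eta^j)/(1-\eta)$, which is exactly the first stated bound.

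\textbf{Step 3: existence and convergence.} The bound from Step 2 is at most $\eta^k|{\bf z}^1-{\bf z}^0|/(1-\eta)$, which tends to $0$ as $k\to\infty$. So $({\bf z}^j)$ is Cauchy in $\mathbb{R}^d$ and converges to some ${\bf z}^\star$. Because $O$ is closed, ${\bf z}^\star\in O$.

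\textbf{Step 4: ${\bf z}^\star$ is a fixed point.} Continuity of $f$ on $O$ follows from the Lipschitz bound. Then
\[
|f({\bf z}^\star)-{\bf z}^{j+1}|=|f({\bf z}^\star)-f({\bf z}^j)|\le\eta|{\bf z}^\star-{\bf z}^j|\to0 ,
\]
so passing to the limit in ${\bf z}^{j+1}=f({\bf z}^j)$ gives $f({\bf z}^\star)={\bf z}^\star$.

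\textbf{Step 5: uniqueness.} If ${\bf w}\in O$ is another fixed point, then
\[
|{\bf z}^\star-{\bf w}|=|f({\bf z}^\star)-f({\bf w})|\le\eta|{\bf z}^\star-{\bf w}| .
\]
Since $\eta<1$, this forces ${\bf w}={\bf z}^\star$. The argument does not depend on the initialization, so the same fixed point is reached from every ${\bf z}^0\in O$.

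\textbf{Step 6: rate and the second bound.} Fix $k$ and let $j\to\infty$ in the Step 2 inequality. Continuity of the norm gives
\[
|{\bf z}^\star-{\bf z}^k|\le \frac{\eta^k}{1-\eta}|{\bf z}^1-{\bf z}^0| ,
\]
which is the second stated bound. It also shows exponential convergence at rate $\eta^k$.

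\textbf{Main obstacle.} There is no substantive difficulty. The only points needing care are invariance of $O$ under $f$, which is needed so that all iterates stay where the Lipschitz bound holds, and closedness of $O$, which is needed so the limit lies in $O$.
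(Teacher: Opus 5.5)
Your proof is correct and follows the paper's argument: both use the same telescoping of successive differences with $|{\bf z}^{i+1}-{\bf z}^i|\le\eta^i|{\bf z}^1-{\bf z}^0|$, the same finite geometric sum, and the same limit $j\to\infty$ to obtain both bounds. The only difference is that the paper cites the Banach fixed-point theorem (using completeness of the closed set $O$) for existence, uniqueness, and convergence, and separately records $|{\bf z}^j-{\bf z}^\star|\le\eta^j|{\bf z}^0-{\bf z}^\star|$ for later use, whereas you reprove Banach inline via the Cauchy argument, which makes your version self-contained.
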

\end{lavenderbox}
\begin{proof}
Because $O$ is a closed subset of $\mathbb{R}^d$, it is complete.
The map \(f\) maps \(O\) into itself and is a contraction on \(O\).
Therefore, by the Banach fixed-point theorem, \(f\) has a unique fixed point \({\bf z}^\star\in O\), and the iterates converge to it.

A geometric convergence bound follows directly from the contraction property:
\[
|{\bf z}^{j+1}-{\bf z}^\star|
=
|f({\bf z}^j) - f({\bf z}^\star)|
\le
\eta |{\bf z}^j-{\bf z}^\star|.
\]
Repeating this inequality gives
\begin{equation}\label{eq:geometric_convergence}
|{\bf z}^j-{\bf z}^\star|
\le
\eta^j|{\bf z}^0-{\bf z}^\star|.
\end{equation}

For $j\geq k\geq 0$, telescoping gives
\[
{\bf z}^j - {\bf z}^k = \sum_{i=k}^{j-1}({\bf z}^{i+1} - {\bf z}^i).
\]
Taking norms and applying the triangle inequality,
\begin{equation}\label{eq:telescoping_norm}
    |{\bf z}^j - {\bf z}^k| = \left|\sum_{i=k}^{j-1}({\bf z}^{i+1} - {\bf z}^i)\right| \leq \sum_{i=k}^{j-1} |{\bf z}^{i+1} - {\bf z}^i|.
\end{equation}
Each increment can be bounded using the contraction property:
\[
|{\bf z}^{i+1} - {\bf z}^i| = |f({\bf z}^i) - f({\bf z}^{i-1})| \leq \eta |{\bf z}^i - {\bf z}^{i-1}|.
\]
Repeating this inequality gives
\[
|{\bf z}^{i+1} - {\bf z}^i| \leq \eta^i |{\bf z}^1 - {\bf z}^0|.
\]
Applying this in \eqref{eq:telescoping_norm},
\[
|{\bf z}^j - {\bf z}^k| \leq \sum_{i=k}^{j-1} \eta^i |{\bf z}^1 - {\bf z}^0|.
\]
The geometric sum is
\[
\sum_{i=k}^{j-1} \eta^i = \frac{\eta^k-\eta^j}{1-\eta}.
\]
Thus,
\[
|{\bf z}^j - {\bf z}^k| \leq \frac{\eta^k-\eta^j}{1-\eta} |{\bf z}^1 - {\bf z}^0|.
\]
Finally, since ${\bf z}^j\to{\bf z}^\star$, continuity of the norm gives
\[
|{\bf z}^\star-{\bf z}^k|
=
\lim_{j\to\infty}|{\bf z}^j-{\bf z}^k|.
\]
Therefore,
\[
|{\bf z}^\star-{\bf z}^k|
\leq
\lim_{j\to\infty}
\frac{\eta^k-\eta^j}{1-\eta}
|{\bf z}^1-{\bf z}^0|
=
\frac{\eta^k}{1-\eta}
|{\bf z}^1-{\bf z}^0|.
\]
\end{proof}

\begin{lavenderbox}
    \learnedfixedpoint*
\end{lavenderbox}

\begin{proof}
Since $\hat{D}_t({\bf x},\cdot)$ is a contraction on $O$, by applying \Cref{lem:geometric_convergence} to it, we have that ${\bf z}^j$ converges exponentially to a unique fixed point ${\bf z}^\star\in O$.
Furthermore, we have
\begin{equation}\label{eq:difference_bound}
    |{\bf z}^j - {\bf z}^k| \leq \frac{\eta^k-\eta^j}{1-\eta} |{\bf z}^1 - {\bf z}^0|,\quad |{\bf z}^\star - {\bf z}^k| \leq \frac{\eta^k}{1-\eta} |{\bf z}^1 - {\bf z}^0|.
\end{equation}
By the triangle inequality,
\[
|{\bf z}^j - D_t({\bf x})| \leq |{\bf z}^j - \hat{D}_t({\bf x},{\bf z}_0)| + |\hat{D}_t({\bf x},{\bf z}_0) - D_t({\bf x})|.
\]
By the definition of the iteration,
\[
|{\bf z}^j - D_t({\bf x})| \leq |{\bf z}^j - {\bf z}^1| + |\hat{D}_t({\bf x},{\bf z}_0) - D_t({\bf x})|.
\]
Using the first part of \eqref{eq:difference_bound} with $k=1$,
\[
|{\bf z}^j - D_t({\bf x})| \leq |\hat{D}_t({\bf x},{\bf z}_0) - D_t({\bf x})| + \frac{\eta-\eta^j}{1-\eta} |{\bf z}^1 - {\bf z}^0|.
\]
Applying the same argument, with the second part of \eqref{eq:difference_bound},
\[
|{\bf z}^\star - D_t({\bf x})| \leq |\hat{D}_t({\bf x},{\bf z}_0) - D_t({\bf x})| + \frac{\eta}{1-\eta} |{\bf z}^1 - {\bf z}^0|.
\]
\end{proof}

\subsection{Proof of Proposition~\ref{prop:fpflow}}
\label{app:proof_fpflow}

\begin{lavenderbox}
    \fpflow*
\end{lavenderbox}

\begin{proof}
By definition, \(D_t^\star({\bf x})\) is obtained by solving the inner self-conditioning fixed-point problem at the pair \((t,{\bf x})\). Hence, after taking the fixed point, there is no remaining conditioning variable \({\bf z}\). Therefore
\[
b_t^\star({\bf x})
=
\frac{D_t^\star({\bf x})-{\bf x}}{1-t}
\]
is a function only of \(t\) and \({\bf x}\).

The sampling dynamics are therefore the ordinary ODE
\[
\dot {\bf x}_t=b_t^\star({\bf x}_t).
\]

When the fixed point matches the Bayes-optimal denoiser, $D^\star=D$, then by \eqref{eq:flow_velocity}, we have that
\[
b_t^\star({\bf x}) = \frac{D_t({\bf x})-{\bf x}}{1-t} = b_t({\bf x}).
\]
Therefore, the fixed-point velocity recovers the true velocity.
\end{proof}

\subsection{Proof of Proposition~\ref{prop:convergence}}
\label{app:proof_convergence}

\begin{lavenderbox}
    \convergence*
\end{lavenderbox}

\begin{proof}
From the geometric convergence bound \eqref{eq:geometric_convergence} in \Cref{lem:geometric_convergence},
\[
|{\bf z}^j - {\bf z}^\star| \leq \eta^j |{\bf z}^0 - {\bf z}^\star|.
\]
In the nontrivial case $|{\bf z}^0-{\bf z}^\star|> \varepsilon$, it is enough to require
\[
\eta^j|{\bf z}^0 - {\bf z}^\star| \leq \varepsilon.
\]
Taking logarithms and using $\log\eta<0$ gives
\[
j\geq \frac{\log |{\bf z}^0 - {\bf z}^\star|/\varepsilon}{\log (1/\eta)}.
\]
\end{proof}

\subsection{Proof of Proposition~\ref{prop:fpflowmap}}
\label{app:proof_fpflowmap}

\begin{lavenderbox}
    \fpflowmap*
\end{lavenderbox}

\begin{proof}
    Since \eqref{eq:fp_ode} is an ODE, it defines the flow map \(X_{s,t}^\star({\bf x}_s)={\bf x}_t\) wherever the ODE solution is unique. Uniqueness also gives the composition law: evolving from \(s\) to \(t\) is the same as first evolving from \(s\) to \(u\), and then from \(u\) to \(t\). Thus
    \[
    X_{s,t}^\star
    =
    X_{u,t}^\star\circ X_{s,u}^\star.
    \]
\end{proof}

\subsection{Proof of Proposition~\ref{prop:twotimedenoiser}}
\label{app:proof_twotimedenoiser}

\begin{lavenderbox}
\twotimedenoiser*
\end{lavenderbox}
\begin{proof} 
    % \textit{(i) Flow map recovery.}
    First, by definition,
    \[
    \delta_{s,t}({\bf x})
    =
    {\bf x}+(1-s)v_{s,t}({\bf x}).
    \]
    Therefore
    \[
    v_{s,t}({\bf x})
    =
    \frac{\delta_{s,t}({\bf x})-{\bf x}}{1-s}.
    \]
    Substituting this into
    \[
    X_{s,t}^\star({\bf x})
    =
    {\bf x}+(t-s)v_{s,t}({\bf x})
    \]
    gives
    \[
    X_{s,t}^\star({\bf x})
    =
    {\bf x}+\frac{t-s}{1-s}\bigl(\delta_{s,t}({\bf x})-{\bf x}\bigr).
    \]
    Rearranging gives
    \[
    X_{s,t}^\star({\bf x})
    =
    \frac{1-t}{1-s}{\bf x}
    +
    \frac{t-s}{1-s}\delta_{s,t}({\bf x}),
    \]
    which is \Cref{eq:fm-recovery}.
    % Recall the parameterization $X_{s,t}^\star({\bf x}) = {\bf x} + (t-s)v_{s,t}({\bf x})$ from~\Cref{eq:fixedpointflowmap}. 

    % Substituting $v_{s,t} = (\delta_{s,t} - {\bf x})/(1-s)$ into the parameterization of $X_{s,t}^\star$:
    % \begin{equation}
    %     X_{s,t}^\star({\bf x}) = {\bf x} + (t-s)\,\frac{\delta_{s,t}({\bf x}) - {\bf x}}{1-s}
    % = \left(1 - \frac{t-s}{1-s}\right){\bf x} + \frac{t-s}{1-s}\delta_{s,t}({\bf x})
    % = \frac{1-t}{1-s}{\bf x} + \frac{t-s}{1-s}\delta_{s,t}({\bf x})
    % \end{equation}
    % which is~\Cref{eq:fp-recovery}.

    % \textit{(ii) Boundary condition.} Setting $t=s$ in the parameterization $X_{s,t}^\star$ immediately gives $X_{s,t}^\star(\bf x)=\bf x$, so the boundary condition holds by construction regardeless of $v_{s,t}$.
    
    % \textit{(iii) Diagonal condition.} 
    % \textit{(ii) Diagonal identity.} 
    For the diagonal identity,
    \[
    \delta_{t,t}({\bf x})
    =
    {\bf x}+(1-t)b_t^\star({\bf x}).
    \]
    Since
    \[
    b_t^\star({\bf x})=\frac{D_t^\star({\bf x})-{\bf x}}{1-t},
    \]
    we get
    \[
    \delta_{t,t}({\bf x})=D_t^\star({\bf x}),
    \]
    which is \Cref{eq:fm-diagonal}.
    
    It remains to prove the semigroup identity. Let
    \[
    {\bf z}:=X_{s,u}^\star({\bf x})
    =
    \frac{1-u}{1-s}{\bf x}
    +
    \frac{u-s}{1-s}\delta_{s,u}({\bf x}).
    \]
    Using \Cref{eq:fm-recovery},
    \[
    X_{s,t}^\star({\bf x})
    =
    \frac{1-t}{1-s}{\bf x}
    +
    \frac{t-s}{1-s}\delta_{s,t}({\bf x}),
    \]
    \[
    X_{u,t}^\star({\bf z})
    =
    \frac{1-t}{1-u}{\bf z}
    +
    \frac{t-u}{1-u}\delta_{u,t}({\bf z}).
    \]
    Since the semigroup property~\Cref{eq:fpfm-semigroup} holds,
    \[
    X_{s,t}^\star({\bf x})=X_{u,t}^\star(X_{s,u}^\star({\bf x}))=X_{u,t}^\star({\bf z}).
    \]
    Substituting the expression for \({\bf z}=X_{s,u}^\star({\bf x})\) into the right-hand side gives
    \[
    \begin{aligned}
    X_{s,t}^\star({\bf x})
    &= \frac{1-t}{1-u}\left(\frac{1-u}{1-s}{\bf x}+\frac{u-s}{1-s}\delta_{s,u}({\bf x})\right) 
    +
    \frac{t-u}{1-u}\delta_{u,t}({\bf z}) \\
    & =
    \frac{1-t}{1-s}{\bf x}
    +
    \frac{(1-t)(u-s)}{(1-u)(1-s)}\delta_{s,u}({\bf x})
    +
    \frac{t-u}{1-u}\delta_{u,t}({\bf z}).
    \end{aligned}
    \]
    % \[
    % {\bf z}:=X_{s,u}^\star({\bf x})
    % =
    % \frac{1-u}{1-s}{\bf x}
    % +
    % \frac{u-s}{1-s}\delta_{s,u}({\bf x}),
    % \]
    % Using \Cref{eq:fm-recovery},
    % \[
    % X_{s,t}^\star({\bf x})
    % =
    % \frac{1-t}{1-s}{\bf x}
    % +
    % \frac{t-s}{1-s}\delta_{s,t}({\bf x}),
    % \]
    % \[
    % X_{s,u}^\star({\bf x})
    % =
    % \frac{1-u}{1-s}{\bf x}
    % +
    % \frac{u-s}{1-s}\delta_{s,u}({\bf x}),
    % \]
    % and
    % \[
    % X_{u,t}^\star({\bf z})
    % =
    % \frac{1-t}{1-u}{\bf z}
    % +
    % \frac{t-u}{1-u}\delta_{u,t}({\bf z}).
    % \]
    % Since the semigroup property~\Cref{eq:fpfm-semigroup} holds, it gives
    % \[
    % X_{s,t}^\star({\bf x})=X_{u,t}^\star(X_{s,u}^\star({\bf x}))=X_{u,t}^\star({\bf z}).
    % \]
    % Substituting the expression for \({\bf z}=X_{s,u}^\star({\bf x})\) into the right-hand side gives
    % \[
    % X_{s,t}^\star({\bf x})
    % =
    % \frac{1-t}{1-s}{\bf x}
    % +
    % \frac{(1-t)(u-s)}{(1-u)(1-s)}\delta_{s,u}({\bf x})
    % +
    % \frac{t-u}{1-u}\delta_{u,t}({\bf z}).
    % \]
    % Comparing this with
    % \[
    % X_{s,t}^\star({\bf x})
    % =
    % \frac{1-t}{1-s}{\bf x}
    % +
    % \frac{t-s}{1-s}\delta_{s,t}({\bf x})
    % \]
    and canceling the common \({\bf x}\)-term gives
    \[
    \frac{t-s}{1-s}\delta_{s,t}({\bf x})
    =
    \frac{(1-t)(u-s)}{(1-u)(1-s)}\delta_{s,u}({\bf x})
    +
    \frac{t-u}{1-u}\delta_{u,t}({\bf z}).
    \]
    Multiplying by \((1-s)/(t-s)\), we obtain
    \[
    \delta_{s,t}({\bf x})
    =
    \frac{(1-t)(u-s)}{(1-u)(t-s)}
    \delta_{s,u}({\bf x})
    +
    \frac{(1-s)(t-u)}{(1-u)(t-s)}
    \delta_{u,t}({\bf z}).
    \]
    Define
    \[
    \gamma
    =
    \frac{(1-t)(u-s)}{(1-u)(t-s)}.
    \]
    Then
    \[
    1-\gamma
    =
    \frac{(1-s)(t-u)}{(1-u)(t-s)}.
    \]
    Therefore
    \[
    \delta_{s,t}({\bf x})
    =
    \gamma \delta_{s,u}({\bf x})
    +
    (1-\gamma)\delta_{u,t}\bigl(X_{s,u}^\star({\bf x})\bigr).
    \]
    
    Conversely, if this identity for \(\delta\) holds, then applying \Cref{eq:fm-recovery} to both sides gives
    \[
    X_{s,t}^\star({\bf x})=X_{u,t}^\star(X_{s,u}^\star({\bf x})).
    \]
    Thus the two identities are equivalent.
\end{proof}

\subsection{Proof of Proposition~\ref{prop:twotimedenoisertraining}}
\label{app:proof_twotimedenoisertraining}

\begin{lavenderbox}
    \twotimedenoisertraining*
\end{lavenderbox}

% \begin{proof}
% The objective decomposes into an off-diagonal term and a diagonal term, both non-negative:
% \begin{equation}
%     \mathcal{L}(\delta) = \underbrace{\mathbb{E}_{s,u,t}\mathbb{E}_{{\bf x}_0,{\bf x}_1}|\delta_{s,t}(I_s) - \mathsf{sg}(\bar{\delta}_{s,t})|^2}_{\mathcal{L}_{\mathrm{off}}(\delta)}
%     + \underbrace{\mathbb{E}_t\mathbb{E}_{{\bf x}_0,{\bf x}_1}|\delta_{t,t}(I_t) - D_t^\star(I_t)|^2}_{\mathcal{L}_{\mathrm{diag}}(\delta)}.
% \end{equation}
% We show that the true two-time denoiser $\delta_{s,t}$ (Definition~\ref{prop:twotimedenoiser}) is the unique function making both terms vanish.
% Since $D_t^\star$ is given, the diagonal term is minimized when $\delta_{t,t} = D_t^\star$, the tangent (diagonal) condition \Cref{eq:fp-diagonal}.
% The off-diagonal term, with the teacher $\bar{\delta}_{s,t}$ under a stop-gradient, is minimized when $\delta_{s,t}$ equals the corresponding semigroup teacher, the semigroup condition~\Cref{eq:fp-semigroup}. 
% By~\Cref{prop:twotimedenoiser}, both coditions hold for the two-time denoiser of the fixed-point flow. 
% Conversely, the boundary, tangent, and semigroup conditions uniquely characterize the flow map of the autonomous field $b_t^\star$~\citep{boffi2026build}, and hence its two-time denoiser $\delta_{s,t}$ via the exact recovery \Cref{eq:fp-recovery}.
% Both terms are therefore simultaneously minimized if and only if $\delta$ equals the true two-time denoiser, giving uniqueness.
% \paragraph{Diagonal term.}
% \paragraph{Off-diagonal term.}
% \paragraph{}
\begin{proof}
By the two-time denoiser identities \Cref{eq:fm-diagonal} and \Cref{eq:fm-semigroup}, the true two-time denoiser \(\delta\) satisfies
\[
\delta_{t,t}({\bf x})=D_t^\star({\bf x})
\]
and
\[
\delta_{s,t}({\bf x})
=
\gamma \delta_{s,u}({\bf x})
+
(1-\gamma)
\delta_{u,t}\bigl(X_{s,u}^\star({\bf x})\bigr).
\]
Therefore both squared-error terms in \(\mathcal L(\delta)\) are zero, and hence
\[
\mathcal L(\delta)=0.
\]

Conversely, suppose \(\mathcal L(\delta)=0\). Since \(\mathcal L\) is a sum of nonnegative squared norms, each squared norm must vanish almost surely under its corresponding training distribution. The diagonal term gives
\[
\delta_{t,t}({\bf x})=D_t^\star({\bf x})
\]
almost surely. The semigroup term gives
\[
\delta_{s,t}({\bf x})=\mathcal T(\delta)_{s,u,t}({\bf x})
\]
almost surely. Substituting the definition of \(\mathcal T\), this becomes
\[
\delta_{s,t}({\bf x})
=
\gamma \delta_{s,u}({\bf x})
+
(1-\gamma)
\delta_{u,t}\bigl(X_{s,u}^\star({\bf x})\bigr)
\]
almost surely. Thus any zero-loss solution satisfies the diagonal and semigroup consistency identities on the training distributions.
\end{proof}

%% file: sections/appendix_implementation.tex
\section{Experiment Details}
\label{app:impl}

% This appendix will give architectures, training stages and costs, and the full matched-budget NFE accounting.
% [items: Layout/App-C ; claims: supports C4, C7 ; NEED #6 (NFE accounting table)]

\subsection{Analysis details}
\label{app:analysis_detail}
In \Cref{subsec:analysis} (\Cref{fig:refine-contractiveness}), we employ damped Picard iterations~\citep{lauriere2021numerical} to approximate the fixed point and update ${\bf z}^j$ with 200 iterations.
The damped iteration is formulated as ${\bf z}^{j+1} = \alpha {\bf z}^{j} + (1-\alpha) \hat{D}_{t}({{\bf x}}_{t}, {\bf z}^{j})$, where $\alpha$ is a damping hyperparameter.
To find the fixed point at each timestep, we run this process for 200 iterations with $\alpha = 0.3$.

For the analysis with generation performances (\Cref{tab:refinement,fig:warm_vs_cold_j8}), to ensure a fair comparison with the baseline self-conditioning, we utilize a standard Picard iteration (\emph{i.e.,} $\alpha = 0.0$) except for the runs with 100 iterations.
For the 100-iteration runs, we retain the damped Picard with $\alpha = 0.3$.
For the sampling hyperparameters, we set the self-conditioning guidance weight in ELF~\citep{hu2026elf} to $w=1$.

\subsection{Self-conditioning-free model details}
\label{app:cdeq}

We realize the fixed-point denoiser $D^\star$~\eqref{eq:fp_denoiser} as a self-conditioning-free model, ELF$^\star$, by distilling a frozen self-conditioned ELF teacher $\hat{D}$~\citep{hu2026elf} with consistency deep equilibrium (CDEQ) distillation~\citep{lin2026consistency}.
The distillation compresses the fixed-point iteration~\eqref{eq:sc_map} into a single forward that predicts its limit ${\bf z}^\star$, so the resulting model matches the converged self-conditioned denoiser without iterating at inference.

\paragraph{Architecture.} ELF~\citep{hu2026elf} conditions on the flow time $t$ through a bank of four learned prefix tokens, prepended to the latent sequence, to which an embedding of $t$ is added; the self-conditioning guidance weight enters through a second such bank.
Following this design, to let the student track progress along the iteration, we condition it on a consistency time $\tau$ which represents the progress of fixed-point iteration in the same manner: we append a parallel bank of four phase tokens that carry an embedding of $\tau$.
We zero-initialize both the phase token bank and the $\tau$-embedder output, so the phase pathway is initially inert and the warm-started student nearly reproduces the teacher.

\paragraph{Training.} Following~\citet{lin2026consistency}, at each step, we run the fixed-point iteration~\eqref{eq:sc_map} at the training interpolant, ${\bf z}^{j+1}=\hat{D}_t({\bf x},{\bf z}^j)$, from the cold start ${\bf z}^0={\bf 0}$ for $K$ Anderson-accelerated steps, giving a detached trajectory ${\bf z}^0,\dots,{\bf z}^K$ whose last iterate we treat as the fixed point ${\bf z}^\star$.
We assign iterate ${\bf z}^j$ a consistency time $\tau_j=\varepsilon+(1-e^{-\rho j})(T-\varepsilon)$ with $\tau_0=\varepsilon$, and let $c_{\mathrm{skip}}(\tau)=(\tau-\varepsilon)/(T-\varepsilon)$ and $c_{\mathrm{out}}(\tau)=1-c_{\mathrm{skip}}(\tau)$.
The student predicts a consistency function $g_j=c_{\mathrm{skip}}(\tau_j)\,{\bf z}^j+c_{\mathrm{out}}(\tau_j)\,P_j$, where $P_j$ is a closed-form two-point Anderson combination (mixing $1.0$) of two phase-conditioned student passes.
At the cold-start phase $\tau_0$, we have $c_{\mathrm{skip}}=0$, so $g_j$ collapses to the bare network prediction, which is exactly what inference evaluates.
We train $g_j$ with a global term anchoring it to the equilibrium and a local term enforcing consistency with the previous phase, $\mathcal{L}=\lambda_1|g_j-{\bf z}^\star|^2+(1-\lambda_1)|g_j-\mathsf{sg}(g_{j-1})|^2$, where the earlier-phase prediction is stop-gradient, and we retain the teacher's decoder head~\citep{hu2026elf}.

We use $K=20$ teacher iterations with Anderson history $3$, mixing $0.9$, and regularization $10^{-4}$; a schedule with $\varepsilon=0.002$, $T=5.0$, $\rho=0.3$; and loss weight $\lambda_1=0.8$.
Because the consistency weighting leaves the cold-start phase weakly supervised, with probability $0.25$ we regress the cold-start point $\tau_0$ directly onto ${\bf z}^\star$.
The remaining optimization follows the teacher's recipe (\Cref{app:flowmap_impl}).

\paragraph{Inference.} At inference, ELF$^\star$ takes a single forward per flow step with a zero self-conditioning slot fixed at the cold-start phase $\tau_0$, and carries no self-conditioning across flow steps.
Its velocity is therefore autonomous, so generation uses the standard Euler scheme~\eqref{eq:euler}.

\subsection{Self-conditioning flow map details}
\label{app:flowmap_impl}

\paragraph{Architecture.} We parameterize the two-time denoiser $\delta_{s,t}$~\eqref{eq:fm-recovery} by conditioning the ELF backbone on both the source time $s$ and the target time $t$.
We reuse ELF's existing bank of flow-time prefix tokens, assigning half to the target $t$ and half to the source $s$ through the shared time-embedder; this introduces no new parameters and reduces to the single-time ELF denoiser on the diagonal $s=t$, realizing the diagonal condition $\delta_{s,s}=D^\star_s$~\eqref{eq:fm-diagonal} by construction.

\paragraph{Training.} Each training step samples times $0\le s\le u\le t\le 1$ and splits the batch per example into diagonal ($s=t$) and off-diagonal rows.
Following~\eqref{eq:fp-distill}, diagonal rows regress $\delta_{t,t}$ onto the fixed-point denoiser $D^\star_t$.
Off-diagonal rows regress $\delta_{s,t}$ onto the semigroup teacher $\bar{\delta}_{s,t}$, which we build from two stop-gradient passes of the current student, $\delta_{s,u}$ and $\delta_{u,t}$, chained through the midpoint state $X^\star_{s,u}$~\eqref{eq:fm-recovery} by the convex weight $\gamma$~\eqref{eq:fm-semigroup}; here $u$ is the midpoint of $s$ and $t$ in the percentile parameterization.
Following \citet{lee2026flow}, we draw a diagonal row with probability $0.5$ and pin a fraction $1/32$ of the off-diagonal rows to the boundary $(s,t)=(0,1)$ to cover full one-step jumps.
The times follow the teacher's training-time marginal, a logit-normal distribution with $(\mu,\nu)=(-1.5,0.8)$.
%
% We reproduce this marginal by sampling a uniform percentile $\tau$ and applying the inverse CDF of that law, $t=\sigma(\mu+\nu\,\Phi^{-1}(\tau))$ with $\Phi^{-1}$ the probit (standard-normal quantile) to align the training time-interval distribution with the inference distribution.
%
A single student forward at $(s,t)$ then carries the gradient, and we minimize the per-branch mean-squared error with equal weights.

The self-conditioned teacher is steered by a self-conditioning guidance weight $w$~\citep{hu2026elf}, which we hold fixed within each training example.
We sample a single $w$ per example, from the same log-uniform range $[0.5,5.0]$ used in ELF training, and feed that one value to every model call that enters its loss: the teacher passes that form the diagonal target $D^\star_t$, the two student passes that form the off-diagonal semigroup target, and the graded student forward.
Sharing one $w$ across both branches and across teacher and student keeps the targets at a common guidance level, so the student learns a single $w$-conditioned flow map; at inference, $w$ is likewise fixed across all steps and swept to trace the gPPL-entropy frontier.

The offline and online routes of \Cref{sec:flowmap} differ only in how the diagonal target $D^\star_t$ is supplied.
The two-stage FMLM$^\star$ (offline) uses the separately distilled ELF$^\star$ (\Cref{app:cdeq}) as the teacher, which already returns $D^\star_t$ in a single forward.
The one-stage online FMLM$^\star$ instead uses the self-conditioned ELF teacher and forms the target on the fly, cold-starting the iteration ${\bf z}^{j+1}=\hat{D}_t({\bf x},{\bf z}^j)$ at ${\bf z}^0={\bf 0}$, running a fixed number of Picard refinements, and reading out $D^\star_t$ with one final self-conditioned pass; the refinement count is the ``\# FPIs'' in \Cref{tab:nrefinemaps}, where more iterations sharpen the target at a higher training cost.

For distillation in \Cref{subsec:distill}, we use ELF-B as the teacher model.
To ensure tokenizer consistency with the baselines, we train a variant of ELF-B using the GPT-2 tokenizer.
Specifically, we replace its original T5 text encoder \citep{raffel2023exploringlimitstransferlearning} with the last hidden states of a pretrained GPT-2 Large model \citep{radford2019language}.
Aside from this modification, we strictly follow the original ELF architecture and train this variant using the identical hyperparameters.

We follow the hyperparameter settings of the original ELF model.
Both models are trained for 5 epochs with a global batch size of 512 using 8 NVIDIA B200 GPUs.
We employ a combination of Muon optimizers with a peak learning rate of 0.002, incorporating a learning rate warmup over the first 0.5 epochs.
Additionally, we apply an exponential moving average (EMA) with a decay rate of 0.9999 and use the EMA checkpoint for inference.

\paragraph{Inference.} To report the gPPL and entropy for the two- and four-step FMLM$^\star$ (\Cref{tab:onestep,tab:nrefinemaps}), we utilize $\gamma$-sampling \citep{lee2026flow, sabour2026align, kim2024consistency} with $\gamma = 0.75$ and $1.0$, respectively.

%% file: sections/appendix_samples.tex
\section{Qualitative results}
\label{app:samples}

We provide generation samples from the FMLM$^\star$ model.
The one-step, two-step, and four-step samples are shown in \Cref{fig:scfmlm_1step_gen1}, \Cref{fig:scfmlm_2step_gen1}, and \Cref{fig:scfmlm_4step_gen1}, respectively.

\clearpage
\begin{figure}[p]
    \centering

    \begin{samplebox}{\normalfont\textbf{Sampling Steps: 1} \hfill \normalfont\small \textcolor{white}{gPPL: 92.37, Entropy: 5.27}}
        \small\linespread{0.9}\selectfont
 are related to increasing the short-term size of the water cover, or the temperature level of which can promote environmental change impacts. These waters and major major agricultural agents may be responsible for temperature increase in the ocean. But the increase of warming from low temperatures are also increasingly increased because of the long-term temperature increase of glaciers.

More people living in developing countries such such as South Africa and South America, say their more short-term activities have to have significant impacts on the world environmental systems.

According to most-relevant findings, the tropics vegetation crops are the presence of seurane vegetation, the south of the St. Lawrence, the continent's estuary, according to data reported by Nor se researchers of the Japanese-1990 era and the current the meeting of [1ehorn. And their studies also show that through recent global past activities has caused some 7 million people to water increases through from 20 to 2000 from U.S. 2008, 10 million years, including, and fishing.

Sea waters have a major part in the long-term climate gases due to climate disturbances,, in order to counteract this increase on marine water water.

A significant increase in moreaching, Orests in the late African Sea through the the Mediterranean Sea, may have impacts on water ecosystems, the presence is a major source of culprits such as marine whales. The outcome could cause much greater water levels [20]. Water in Southern countries and other contributions to water by the rise in the U.S. could also lead to a broader extent of the world environmental system, as in Jojal;'le reported in Sept. [ 2004. Station areas in wetlands could cause an increase by increasing water usage, by by drying up access to aater [23 and environmental degradation by by developing improved water quality from floods, the extent of water's rocks and also water.

To clarify, the specific environmental impacts of Nij' water over the year will have a significant influence on N\&es and H. says, which indicates that as the water impacts of freshwater through surrngs are particularly promising.

Yet the broader implications of global energy policies may also have large effects on large-term environmental impacts from climate sources. Note that the possibility of a share of s oil-based fuels will emerging from Earth's water into the mid-2000 era. Theort studies are also more than five studies.

We suggest that the fish diets will increase temperature increase and warmth in the 20 years to come, at the range of results, showing the benefit of lower energy approaches for all people studied, Jo'en says.

A fish practice was rated at about 5.5 percent for people who use fish for others, while environmental effects estimates are very poor. It is noteworthy, we included evidence of fitness for the long term studies.

Additionally, the researchers found that fish feeding caused increase in temperature in adults, due the increases in water, in addition to a decrease in rainfall in tropical populations, a much higher over the 2010 season. The growth rate observed among Y fish in individuals increased 30 percent in temperature and temperature by increasing through the longest after consuming more than a month among participants, and a greater increase occurred on a longer-term regimen. Other studies also indicated that the UV diet and water are likely to reduce conditions in climate conditions for the O-species. In addition, according to the original studies showed that these warming does will be involved in the increasing value of climate levels.

In addition, the long-term study has been turned out to reflect somewhat in pollutants (H'an, G. 13. "Based on understanding that there are differences in the amount of products being delivered by reulture,"27. Future efforts to mean for heavy water can be improved the health in relative to marine programs (20, 32).

Therefore, we may also observe an important use of interventions in water health each year in populations, including the efforts to ensure short-term temperatures. Thusful feeding is encouraged along with temperature gain increases in marine water.

Some studies over water diets increase the value of people' energy by as they cause long term changes, including David Eich (34 and 32). Mediterranean Farmers. Subrain diets should increase by rate of 10 percent over the next five years (see F. 32).

Climateforestationing reducing water water levels is a significant threat to the nature of B- and water systems. However, the research also indicate that the increasing loss of consumption of refre fish in nature is also likely sustainable (28, 34).

On the other hand, we suggest that fish rewulture use fuels in the (louific climate, together by reducing the abundance of water and water nutrients that they expect an a decrease in water-use rates. Because out of course, fish crops may increase the number of low-income individuals and cardiovascular disease conditions, emphasizing with populous populations in the Americas
    \end{samplebox}
    \caption{A sample generated by FMLM$^\star$ with one-step decoding.}
    \label{fig:scfmlm_1step_gen1}
\end{figure}

\clearpage
\begin{figure}[p]
    \centering

    \begin{samplebox}{\normalfont\textbf{Sampling Steps: 2} \hfill \normalfont\small \textcolor{white}{gPPL: 69.28, Entropy: 5.42}}
        \small\linespread{0.9}\selectfont
 go forward. We have done a lot in working with, but we have all the the best of bringing in customers and we in the right people to prove our goals and show how easy it is to be.

The BitcoinFresh team is efficient and also the workhorse of the company. So all this is a company where you can spend anything and countless dollars right out there. Check up with us

Jared Upton, Cofounder,

At the last time we've been surrounded by quite a bunch of people we've long been with and loved, and have begun to work on another project. We have thanked all the customers who wanted to share awareness of our existence within...

Unfortunately I couldn't wait to give so much gratitude to the amazing all of people who have felt so incredibly satisfied putting their focus on how to be part of the Bitcoin Company. I most of can't wait to put the rest of time focusing on seeing two people working on us, and giving out our own product.

Jared Upton,

We

Tell you, it's been almost four-years to get this up, but we've always had to work quickly. We tried to do a huge amount to get for the buildings. Any single look that we want an event comes up early in the back air so we don't even turn our screen again.

I just said you were excited about it. We have over \$1000. This is a challenge, but there's a well trained team. Our long-term goal is to get back onto the floor. So if you want to be inspired, please sign in. (Credit: anbyl / Flickr PhotographyIf you read The League games called Hearthstone, it's seems that Longland's partner e-Sport Games is just about to start an its release of The Porsche Racer on Friday 2017. But it actually had a lot of chatter within two weeks as too, with a lot of rumors appearing this week around-- and more than hundreds of people have already played it in the world. Now that's all we'll decide what it does at the end of the coming year.

The Flying GT is a very mysterious title, but got the facts going before the game was released, and the problem is what we're looking to see. The Street Road car's air tires are fan-loaded at \$70degF, allowing for extreme maneuvers across the course, requiring less than 30 seconds to drive the car.

Firsting note, both cool details and details show how the team of this actually helps you. So, we're working to update it to everyone else. We'll see more of the full news.

Like the upcoming Flying GT website, it will release a full review early today, stay kind for you around the world. It's also a real enthusiast project, so we can also keep mind some of the important back points before ready to get in with this Porsche car that we play in 2017.

Importantly, it's not surprising that Muzway will have a high--able micro card at home. If you also have a serious road driving experience at 27, then it's one of the best cards I've seen. You can also track these builds on social networks and Twitter, and Twitter, PlayStation Messenger, and Facebook.

Due a near-native chipset is included for South Road, it will be an co-ear, much improved controller for both Android and Linux.

Elreo 20x is a new social game from Sanway Entertainment and large-mono company that allows dynamic movement changes. When available on the website, these maps path-siveting is persistent arenas for multiplayer matches.

As with player gaming. As it did with the previous games, teams include a very large array of co-op features to replicate the individual mechanics of South Formula.

San Francisco itself received around 100 points after release -- which is isn't a good thing for high games, as it seems, and it includes a wide variety of features for the Go game.

Just slipping right off the edge of your car by causing obstacles, making a car mow without trying to push it down an upwards path, or take an whister towards you to overtake it.

O...

The fact that you can do things from inside it the Horizon's map is another instance of engagement, and even on-screen sessions bring creating a much greater sense of investment on player players in a period when tense online battles. As a result, contribution is extremely high indeed, and the high experience per population on each map will lower considerably. As a result of Horizon Racing, we decided in order to develop some individual multiplayer features so that it's very easy to realize will also be a tougher challenge as the dates build up.

This is what we want to see; it
    \end{samplebox}
    \caption{A sample generated by FMLM$^\star$ with two-step decoding.}
    \label{fig:scfmlm_2step_gen1}
\end{figure}

\clearpage
\begin{figure}[p]
    \centering

    \begin{samplebox}{\normalfont\textbf{Sampling Steps: 4} \hfill \normalfont\small \textcolor{white}{gPPL: 56.60, Entropy: 5.38}}
        \small\linespread{0.9}\selectfont
 the same reason, Prostars continue to be enthusiastic. But for building a huge brand, getting us back in business is a little back burner, but we have certainly worked all of the same people together. Prostars and Adidas will continue to make the mark in basketball and motor racing. And unfortunately, their career has been continually disrupted, and that strength could be greatly improved if just it wasn't the answer.

So what we're asking for is game knowledge. Because NASCAR is a talented team and the wealthy, you just give out the money is ready to work for.

What we want to tell you is know that you will be far better. The NASCAR's Strategy Course takes 30 minutes per week, looking for a way to make your truly know how to get some way up.

This ability to go fast will take your mind a lot of excitement over the next year. What is important about this course is you can improve your ability and get you back to the top. This will only take time to take the end of your life. It's one awesome to see the course so often that you are going to take the time to care to your concerns, to bring your brand alive to the public.

For comparison, Pro Sports is fairly awesome. But it's also a real source of attention, the big video. You are demonstrating it, putting your number on the wall -- it's going to be the difference to your life, making that point, and the results of surviving the financial crisis could have changed the end point of your career...

Unfortunately, we also know your mindset is no not who you are. I've never been able to be all done due to company updates. Even if you sit down with your team when thinking about reaching the position you are intended to reach,, it can be helpful to talk with someone you think you like. Hopefully he can change his company mindset, think about getting down, and scan his eyes, and try to see how things are. While it's quite easy to get yourself back off the top of ground, think at the mindset and resources that you have about how to succeed...

If you really think about having all the hard efforts to keep it up and your ability to pursue it as well as possible, you would find many people are assuming the same. My peers realize, after all, there are so many so few people who know how to work well in their life.

Showbacks

Prior to this project, it's quite different for my own company and it was a difference for me.

I want to mention that you have already put a lot of attention along with the efforts you've done for NASCAR Marketing, so asked, after deciding. Can you help go even further?

I have fought with this goal every since. I've been at this good gaming park for years, especially when that finally comes to learn how to keep with the company's long goals.

With focus on public finance, we can continue to pursue the other type of things we need - making ourselves more active, through what we are able to do, and help us get going higher again.

There are so many ways of doing better rather than great anymore.

To keep things simple, there are plenty of articles to talk about. But in the end, that doesn't be something you would necessarily want to see, but I suggest you make serious comments.In an effort to discuss the relief and recovery problems, of those men who have been able to have heroin treated this year, has published one of the biggest Day of Fame posts, a website specifically for people in America.

One of those she posted was co-churchist Father John S. O-Salgar who works at New York and New Anne, St Pierre and Miel Univ., in TOWN. He talked about his given fortune helping him deal with obesity problems, booze still hit a huge sales high.

"The of my goal to the day is putting the money that help to stay here and work in these issues among the people. A lady who was walking lazy and then came out to help in knowing about several relevant issues and get a job," Leanger wrote.

The post came from David Leber who served for A.A in a Connecticut plane and then worked as an aviation flight demonstrator in the United States, 35-year-old man who was removed from the Connecticut Air Company's job after being charged with drug possession for a prior three years, felony drug possession and possession, and a Rolled Euler in Florida.

Salanger quickly hoped he could get back his money as he also had to sign an introductory fee of \$200 a month, showing them he worked on hormone monitoring and protecting his body against inflammation. That year he felt that money was not out of hand.

Devin
    \end{samplebox}
    \caption{A sample generated by FMLM$^\star$ with four-step decoding.}
    \label{fig:scfmlm_4step_gen1}
\end{figure}